\def\eqref#1{equation~\ref{#1}}
\def\1{\bm{1}}
\DeclareMathAlphabet{\mathsfit}{\encodingdefault}{\sfdefault}{m}{sl}
\SetMathAlphabet{\mathsfit}{bold}{\encodingdefault}{\sfdefault}{bx}{n}
\DeclareMathOperator*{\argmin}{arg\,min}
\newcommand{\code}[1]{\texttt{#1}}
\newtheorem{thm}{Theorem}
\definecolor{LightGray}{rgb}{0.9,0.9,0.9}
\newcommand{\expnumber}[2]{{#1}\mathrm{e}{#2}}
\title{Multifactor Sequential Disentanglement via Structured Koopman Autoencoders}
\author{Nimrod Berman$^*$, Ilan Naiman\thanks{joint first authors} $\,$, Omri Azencot \\
Department of Computer Science \\
Ben-Gurion University of the Negev \\
\texttt{\{bermann,naimani\}@post.bgu.ac.il}, \texttt{azencot@cs.bgu.ac.il} \\
}
\begin{document}

\maketitle

\begin{abstract}

Disentangling complex data to its latent factors of variation is a fundamental task in representation learning. Existing work on sequential disentanglement mostly provides two factor representations, i.e., it separates the data to time-varying and time-invariant factors. In contrast, we consider \emph{multifactor} disentanglement in which multiple (more than two) semantic disentangled components are generated. Key to our approach is a strong inductive bias where we assume that the underlying dynamics can be represented linearly in the latent space. Under this assumption, it becomes natural to exploit the recently introduced Koopman autoencoder models. However, disentangled representations are not guaranteed in Koopman approaches, and thus we propose a novel spectral loss term which leads to structured Koopman matrices and disentanglement. Overall, we propose a simple and easy to code new deep model that is fully unsupervised and it supports multifactor disentanglement. We showcase new disentangling abilities such as swapping of individual static factors between characters, and an incremental swap of disentangled factors from the source to the target. Moreover, we evaluate our method extensively on two factor standard benchmark tasks where we significantly improve over competing unsupervised approaches, and we perform competitively in comparison to weakly- and self-supervised state-of-the-art approaches. The code is available at \href{https://github.com/azencot-group/SKD}{GitHub}.

\end{abstract}

\section{Introduction}

Representation learning deals with the study of encoding complex and typically high-dimensional data in a meaningful way for various downstream tasks \citep{goodfellow2016deep}. Deciding whether a certain representation is better than others is often task- and domain-dependent. However, disentangling data to its underlying explanatory factors is viewed by many as a fundamental challenge in representation learning that may lead to preferred encodings~\citep{bengio2013representation}. Recently, several works considered two factor disentanglement of sequential data in which time-varying features and time-invariant features are encoded in two separate sub-spaces. In this work, we contribute to the latter line of work by proposing a simple and efficient unsupervised deep learning model that performs \emph{multifactor} disentanglement of sequential data. Namely, our method disentangles sequential data to more than two semantic components.

One of the main challenges in disentanglement learning is the limited access to labeled samples, particularly in real-world scenarios. Thus, prior work on sequential disentanglement focused on unsupervised models which uncover the time-varying and time-invariant features with no available labels~\citep{hsu2017unsupervised, li2018disentangled}. Specifically, two feature vectors are produced, representing the dynamic and static components in the data, e.g., the motion of a character and its identity, respectively. Subsequent works introduce two factor self-supervised models which incorporate supervisory signals and a mutual information loss~\citep{zhu2020s3vae} or data augmentation and a contrastive penalty~\citep{bai2021contrastively}, and thus improve the disentanglement abilities of prior baseline models. \cite{yamada2020disentangled} proposed a probabilistic model with a ladder module, allowing certain multifactor disentanglement capabilities. Still, to the best of our knowledge, the majority of existing work do not explore the problem of unsupervised multifactor sequential disentanglement.

In the case of static images, multiple disentanglement approaches have been proposed~\citep{kulkarni2015deep, higgins2016beta, kim2018disentangling, chen2018isolating, chen2016infogan, burgess2018understanding, kumar2017variational, bouchacourt2018multi}. In addition, there are several approaches that support disentanglement of the image to multiple distinct factors. For instance, \cite{li2020mixnmatch} design an architecture which learns the shape, pose, texture and background of natural images, allowing to generate new images based on combinations of disentangled factors. In~\citep{xiang2021disunknown}, the authors introduce a weakly-supervised framework where $N$ factors can be disentangled, given $N-1$ labels. In comparison, our approach is fully unsupervised, deals with sequential data and the number of distinct components is determined by a hyperparameter.

Recently, \cite{locatello2019challenging} showed that unsupervised disentanglement is impossible without inductive biases on models and datasets. While exploiting the underlying temporal structure had been shown as a strong inductive bias in existing disentanglement approaches, we argue in this work that a stronger assumption should be considered. Specifically, based on Koopman theory~\citep{koopman1931hamiltonian} and practice~\citep{budivsic2012applied, brunton2021modern}, we assume that there exists a learnable representation where the dynamics of input sequences becomes \emph{linear}. Namely, the temporal change between subsequent latent feature vectors can be encoded with a matrix that approximates the \emph{Koopman operator}. Indeed, the same assumption was shown to be effective in challenging scenarios such as fluid flows~\citep{rowley2009spectral} as well as other application domains~\citep{rustamov2013map, kutz2016dynamic}. However, it has been barely explored in the context of disentangled representations.

In this paper, we design an autoencoder network~\citep{hinton1993autoencoders} that is similar to previous Koopman methods \citep{takeishi2017learning, morton2018deep}, and which facilitates the learning of linear temporal representations. However, while the dynamics is encoded in a Koopman operator, disentanglement is \emph{not} guaranteed. To promote disentanglement, we make the following key observation: eigenvectors of the approximate Koopman operator represent time-invariant and time-variant factors. Motivated by this understanding, we propose a novel spectral penalty term which splits the operator's spectrum to separate and clearly-defined sets of static and dynamic eigenvectors. Importantly, our framework naturally supports multifactor disentanglement: every eigenvector represents a unique disentangled factor, and it is considered static or dynamic based on its eigenvalue. 

\paragraph{Contributions.} Our main contributions can be summarized as follows.
\begin{enumerate}
    \item We introduce a strong inductive bias for disentanglement tasks, namely, the dynamics of input sequences can be encapsulated in a matrix. This assumption is backed by the rich Koopman theory and practice.
    \item We propose a new unsupervised Koopman autoencoder learning model with a novel spectral penalty on the eigenvalues of the Koopman operator. Our approach allows straightforward multifactor disentanglement via the eigendecomposition of the Koopman operator.
    \item We extensively evaluate our method on new multifactor disentanglement tasks, and on several two factor benchmark tasks, and we compare our work to state-of-the-art unsupervised and weakly-supervised techniques. The results show that our approach outperforms baseline methods in various quantitative metrics and computational resources aspects.
\end{enumerate}

\section{Related Work}

\paragraph{Sequential Disentanglement.} Most existing work on sequential disentanglement is based on the dynamical variational autoencoder (VAE) architecture~\citep{girin2020dynamical}. Initial attempts focused on probabilistic models that separate between static and dynamic factors, where in~\citep{hsu2017unsupervised} the joint distribution is conditioned on the mean, and in~\citep{li2018disentangled} conditioning is defined on past features. Subsequent works proposed self-supervised approaches that depend on auxiliary tasks and supervisory signals~\citep{zhu2020s3vae}, or on additional data and contrastive penalty terms~\citep{bai2021contrastively}. In~\cite{han2021disentangled}, the authors replace the common Kullback--Leibler divergence with the Wasserstein distance between distributions. Some approaches tailored to video disentanglement use generative adversarial network (GAN) architectures~\citep{villegas2017decomposing, tulyakov2018mocogan} and a recurrent model with adversarial loss~\citep{denton2017unsupervised}. Finally, \cite{yamada2020disentangled} proposed a variational autoencoder model including a ladder module~\citep{zhao2017learning}, which allows to disentangle multiple factors. The authors demonstrated qualitative results of multifactor latent traversal between various two static features and three dynamic features on the Sprites dataset.


\paragraph{Dynamics Learning.} Over the past few years, an increasing interest was geared towards learning and representing dynamical systems using deep learning techniques. Two factor disentanglement methods based on Kalman filter~\citep{fraccaro2017disentangled}, and state-space models~\citep{miladinovic2019disentangled} focus on ordinary differential equation systems. Other methods utilize the mutual information between past and future to estimate predictive information~\cite{clark2019unsupervised, bai2020representation}. Mostly related to our approach are Koopman autoencoders~\citep{lusch2018deep, yeung2019learning, otto2019linearly, li2019learning, azencot2020forecasting, han2021desko}, related to classical learning methods, e.g.,~\cite{azencot2019consistent, cohen2021modes}. Specifically, in~\citep{takeishi2017learning, morton2018deep, iwata2020neural} the Koopman operator is learned via a least squares solve per batch, allowing to train a single neural model on multiple initial conditions. We base our architecture on the latter works, and we augment it with a novel spectral loss term which promotes disentanglement. Recently, an intricate model for video disentanglement was proposed in~\citep{comas2021self}. While the authors employ Koopman techniques in that work, it is only partially related to our work since they explicitly model pose and appearance components, whereas our approach can model an arbitrary number of disentangled factors. In addition, their architecture is based on the attention network~\citep{bahdanau2014neural}, where the Koopman module is mostly related to prediction. In comparison, in our work the Koopman module is directly responsible for unsupervised disentanglement of sequential data. 

\paragraph{Koopman Spectral Analysis.} Our method is based on learning Koopman operators with structured spectra. Spectral analysis of Koopman operators is an active research topic~\citep{mezic2013analysis, arbabi2017ergodic, mezic2017koopman, das2019delay, naiman2021koopman}. We explore Koopman eigenfunctions associated with the eigenvalue $1$. These eigenfunctions are related to global stability~\citep{mauroy2016global}, and to orbits of the system~\citep{mauroy2013spectral, azencot2013operator, azencot2014functional}. Other attempts focused on computing eigenfunctions for a known spectrum~\citep{mohr2014construction}. Recently, pruning weights of neural networks using eigenfunctions with eigenvalue $1$ was introduced in~\citep{redman2021operator}. However, to the best of our knowledge, our work is among a few to propose a deep learning model for generating spectrally-structured Koopman operators.

\section{Koopman Autoencoder Models}
\label{sec:kae}

\begin{figure*}[t]
  \centering
  \begin{overpic}[width=.9\linewidth]{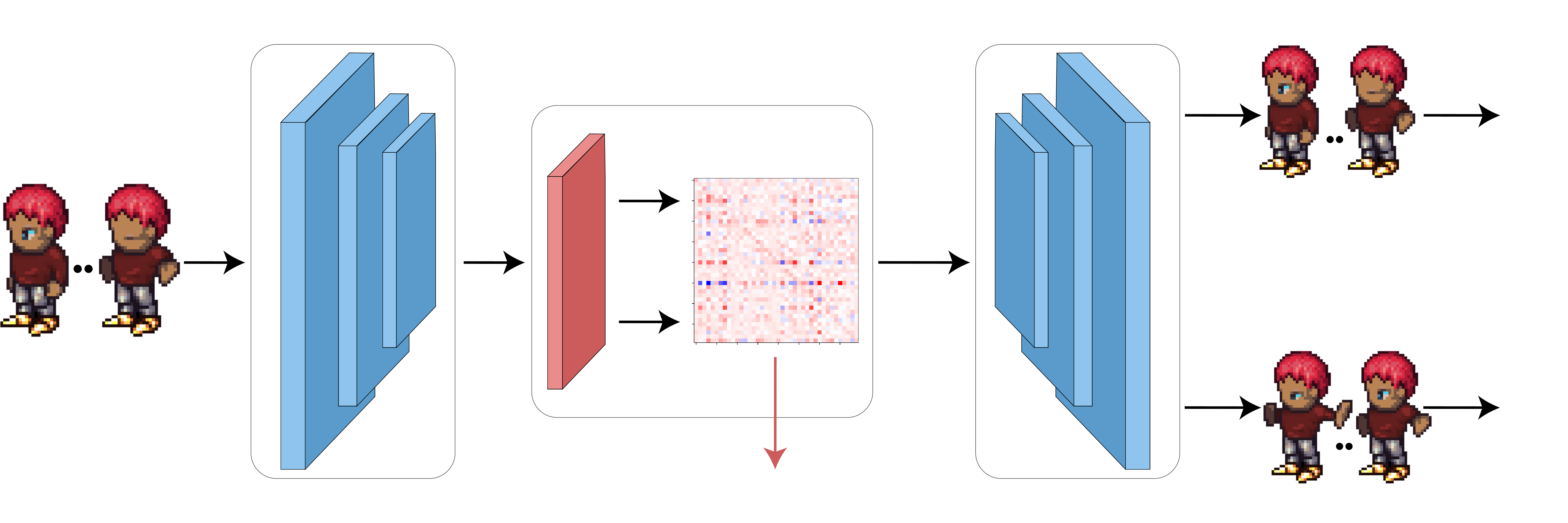} 
        \put(24, 5){$\chi_\mathrm{enc}$} \put(63, 5){$\chi_\mathrm{dec}$}
        \put(12, 18){$X$} \put(30, 18){$Z$}
        \put(39, 22){$Z_p$} \put(39, 14){$Z_f$}
        \put(56, 18){$Z, \tilde{Z}$}
        \put(75.5, 28){$X_\mathrm{rec}$} \put(75.5, 9){$\tilde{X}_f$}
        \put(46, 8){$C$} \put(48, 1){$\mathcal{L}_\mathrm{eig}$}
        \put(96, 25){$\mathcal{L}_\mathrm{rec}$} \put(96, 6.5){$\mathcal{L}_\mathrm{pred}$}
        \put(36, 28){Koopman Module}
  \end{overpic}
  \caption{Our architecture is based on a Koopman autoencoder network which includes encoder $\chi_\mathrm{enc}$, decoder $\chi_\mathrm{dec}$, and a Koopman module that computes the Koopman operator $C$ via least squares solves. We augment this model with a novel spectral penalty term $\mathcal{L}_\mathrm{eig}$ which facilitates the learning of spectrally structured $C$ matrices, and thus supporting multifactor disentanglement by construction.}  
  \label{fig:kae_scheme}
\end{figure*}

\setlength{\textfloatsep}{15pt}
\setlength{\intextsep}{5pt}

We recall the Koopman autoencoder (KAE) architecture introduced in~\citep{takeishi2017learning} as it is the basis of our model. The KAE model consists of an encoder and decoder modules, similarly to standard autoencoders, and in between, there is a Koopman module. The general idea behind this architecture is that the encoder and decoder are responsible to generate effective representations and their reconstructions, driven by the Koopman layer which penalizes for nonlinear encodings.

We denote by $X \in \mathbb{R}^{b \times (t+1) \times m}$ a batch of sequence data $\{ x_{ij} \} \subset \mathbb{R}^{m}$ where $i \in \{1, \dots, b\}$ and $j \in \{1, \dots, t+1\}$ represent the batch sample and time indices, respectively. The tensor $X$ is encoded to its latent representation $Z \in \mathbb{R}^{b \times (t+1) \times k}$ via $Z = \chi_{\mathrm{enc}}(X)$. The Koopman layer splits the latent variables to past $Z_p$ and future $Z_f$ observations, and then, it finds the best linear map $C$ such that $Z_p \cdot C \approx Z_f$. Formally, $Z_p = ( z_{ij} ) \in \mathbb{R}^{b \cdot t \times k}$ for $j \in \{1, \dots, t\}$ and any $i$, and $Z_f = ( z_{ij} ) \in \mathbb{R}^{b \cdot t \times k}$ for $j \in \{2, \dots, t+1\}$ and any $i$, i.e., $Z_p$ holds the first $t$ latent variables per sample, and $Z_f$ holds the last $t$ variables. Then, $C = \argmin_{\tilde{C}} |Z_p \cdot \tilde{C} - Z_f|_F^2 = Z_p^{+} Z_f$, where $A^{+}$ denotes the pseudo-inverse of the matrix $A$. Importantly, the matrix $C$ is computed per $Z$ during both training and inference, and in particular, $C$ is not parameterized by network weights. Additionally, the pseudo-inverse computation supports backpropagation, and thus it can be used during training~\citep{ionescu2015matrix}. Lastly, the latent samples are reconstructed with the decoder $X_{\mathrm{rec}} = \chi_{\mathrm{dec}}(Z)$.

The above architecture employs reconstruction and prediction loss terms: the reconstruction loss promotes an autoencoder learning, and the prediction loss aims to capture the dynamics in $C$. We use the notation $\mathcal{L}_\mathrm{MSE}(X, Y) = \frac{1}{b \cdot t} \sum_{i,j} |Y(i,j) - X(i,j) |_2^2$ for the average distance between tensors $X, Y \in \mathbb{R}^{b \times t \times k}$ for $i \in \{1, \dots, b\}$ and $j \in \{1, \dots, t\}$. Then, the losses are given by
\begin{align}
    \mathcal{L}_\mathrm{rec} (X_\mathrm{rec}, X) &= \mathcal{L}_\mathrm{MSE}(X_\mathrm{rec}, X) \ , \\
    \mathcal{L}_\mathrm{pred} (\tilde{Z}_f, Z_f, \tilde{X}_f, X_f) &= \mathcal{L}_\mathrm{MSE}(\tilde{Z}_f, Z_f) + \mathcal{L}_\mathrm{MSE}(\tilde{X}_f, X_f) \ ,
\end{align}
where $\tilde{Z}_f := Z_p \cdot C$, $\tilde{X}_f := \chi_\mathrm{dec}(\tilde{Z}_f)$, and $X_f$ are the inputs corresponding to $Z_f$ latent variables. The network loss is taken to be $\mathcal{L} = \lambda_\mathrm{rec} \mathcal{L}_\mathrm{rec} + \lambda_\mathrm{pred} \mathcal{L}_\mathrm{pred}$, where $\lambda_\mathrm{rec}, \lambda_\mathrm{pred} \in \mathbb{R}^+$ balance between the reconstruction and prediction contributions. We show in Fig.~\ref{fig:kae_scheme} an illustration of the Koopman autoencoder architecture using the notations above. 

\section{Multifactor Disentangling Koopman Autoencoders}
\label{sec:kae_dis}

How disentanglement can be achieved  given the Koopman autoencoder architecture? For comparison, other disentanglement approaches typically represent the disentangled factors explicitly. In contrast the batch dynamics in KAE models is encoded in the approximate Koopman operator matrix $C$, where $C$ propagates latent variables through time while carrying the static as well as dynamic information. Thus, the time-varying and time-invariant factors are still entangled in the Koopman matrix. We now show that KAE theoretically enables disentanglement under the following analysis.

\paragraph{Koopman disentanglement.} In general, one of the key advantages of Koopman theory and practice is the linearity of the Koopman operator, allowing to exploit tools from linear analysis. Specifically, our approach depends heavily on the spectral analysis of the Koopman operator~\citep{mezic2005spectral}. In what follows, we perform our analysis directly on $C$, and we refer the reader to App.~\ref{app:ktheory} and the references therein for a detailed treatment of the full Koopman operator. The eigendecomposition of $C$ consists of a set of left eigenvectors $\{ \phi_i \in \mathbb{C}^k \}$ and a set of eigenvalues $\{ \lambda_i \in \mathbb{C}\}$ such that
\begin{align}
    \phi_i^T C = \lambda_i \phi_i^T \ , \quad i = 1, \dots, k \ .
\end{align}
The eigenvectors can be viewed as approximate Koopman eigenfunctions, and thus the eigenvectors hold fundamental information related to the underlying dynamics. For instance, the eigenvectors describe the temporal change in latent variables. Formally, 
\begin{align} \label{eq:spectral_dyn}
    z_j^T C = \sum_{i=1}^k \langle z_j^T, \phi_i^T \rangle \phi_i^T C = \sum_i \bar{z}_j^i \lambda_i \phi_i^T \approx z_{j+1}^T \ , \quad j = 1, \dots, t \ ,
\end{align}
where $\bar{z}_j^i := \langle z_j^T, \phi_i^T \rangle$ is the projection of $z_j^T$ on the eigenvector $\phi_i^T$. The approximation follows from $C$ being the best (and not necessarily exact) linear fit between past and future features. Moreover, it follows that predicting step $j+r$ from $j$ is achieved simply by applying powers of the Koopman matrix on $z_j^T$, i.e., $z_j^T C^r = \sum_i \bar{z}_j^i \lambda_i^r \phi_i^T \approx z_{j+r}^T$.

Our approach to multifactor disentanglement is based on the following key observation: \emph{eigenvectors of the matrix $C$ whose eigenvalue is $1$ represent time-invariant factors}. For instance, assume $C$ has a single eigenvector $\phi_1$ with $\lambda_1 = 1$ and $\lambda_i \neq 1$ for $i\neq 1$, then it follows from Eq.~\ref{eq:spectral_dyn} that 
\begin{align} \label{eq:spectral_dis}
    z_j^T C^r = \bar{z}_j^1 \phi_1^T + \sum_{i=2}^k \bar{z}_j^i \lambda_i^r \phi_i^T \ .
\end{align}
Essentially, the contribution of $\phi_1$ is not affected by the dynamics and it remains constant, and thus the first addend remains constant throughout time, and it is related to static features of the dynamics. In contrast, every element in the sum in Eq.~\ref{eq:spectral_dis} is scaled by its respective $\lambda_i^r$, and thus the sum changes throughout time, and these eigenvectors are related to dynamic features. We conclude that the KAE architecture virtually allows disentanglement via eigendecomposition of the Koopman matrix where the static factors are eigenvectors with eigenvalue $1$, and the rest are dynamic factors.

\paragraph{Multifactor Koopman Disentanglement.} Unfortunately, the vanilla KAE model is not suitable for disentanglement as the learned Koopman matrices can generally have arbitrary spectra, with multiple static factors or no static components at all. Moreover, KAE does not allow to explicitly balance the number of static vs. dynamic factors. To alleviate the shortcomings of KAE, we propose to augment the Koopman autoencoder with a spectral loss term $\mathcal{L}_\mathrm{eig}$ which explicitly manipulates the structure of the Koopman spectrum, and its separation to static and dynamic factors. Formally,
\begin{minipage}{.4\linewidth}
\centering
\includegraphics[width=.7\textwidth]{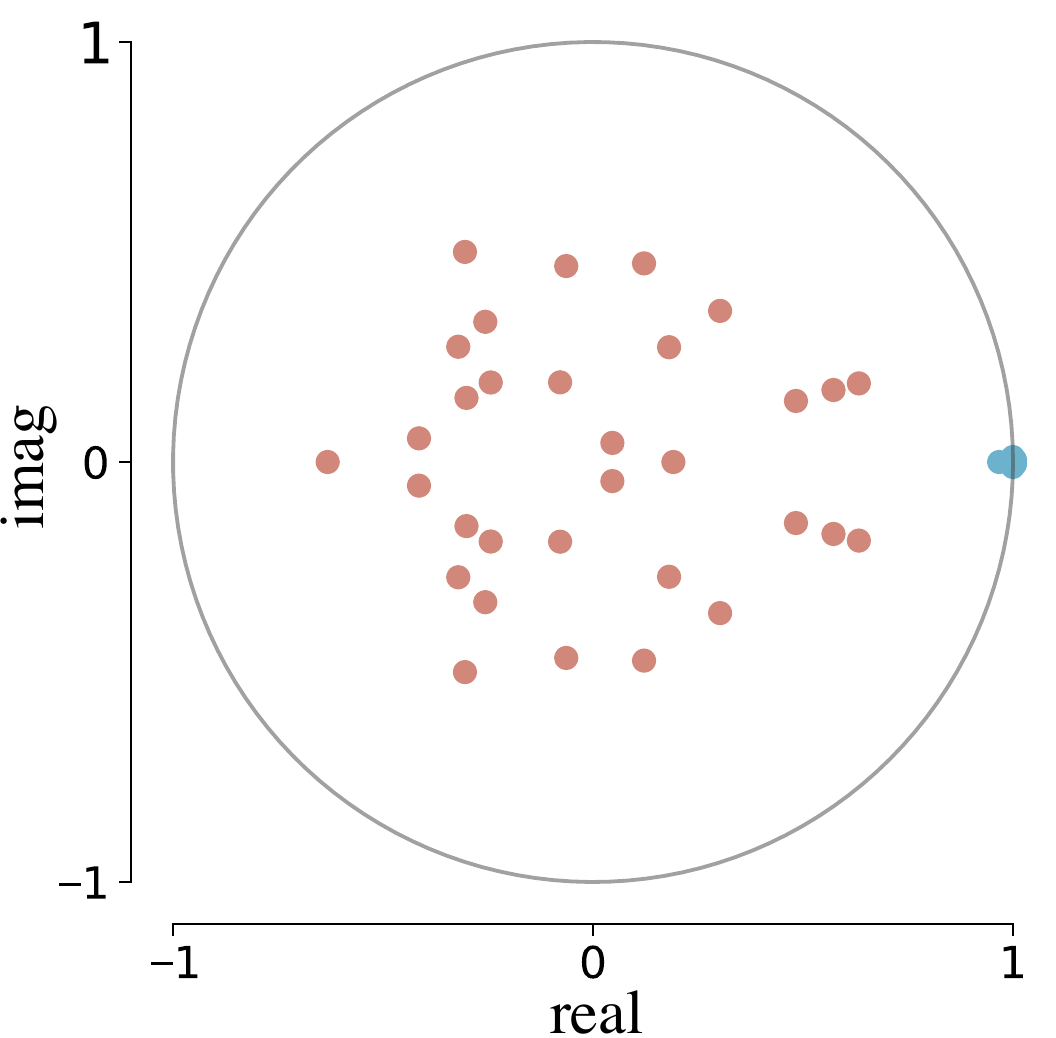}
\end{minipage}\begin{minipage}{.6\linewidth}
    \begin{align}
        \mathcal{L}_\mathrm{stat} &= \frac{1}{k_s} \sum_i^{k_s} | \lambda_i - (1+ \imath 0)|^2\ , \label{eq:loss_stat} \\
        \mathcal{L}_\mathrm{dyn} &= \frac{1}{k_d} \sum_i^{k_d} \xi(|\lambda_i|, \, \epsilon) \ , \label{eq:loss_dyn} \\
        \mathcal{L}_\mathrm{eig} &= \mathcal{L}_\mathrm{stat} + \mathcal{L}_\mathrm{dyn} \ ,
    \end{align}
\end{minipage}
where $k_s$ and $k_d$ represent the number of static and dynamic components, respectively, and thus $k = k_s + k_d$. The term $\mathcal{L}_\mathrm{stat}$ measures the average distance of every static eigenvalue from the complex value $1$. The role of $\mathcal{L}_\mathrm{dyn}$ is to encourage separation between the static and dynamic factors. In practice, this is achieved with a threshold function $\xi$ which takes the modulus of $\lambda_i$ and a user parameter $\epsilon \in (0, 1)$, and it returns $|\lambda_i|$ if $|\lambda_i| > \epsilon$, and zero otherwise. Thus, $\mathcal{L}_\mathrm{dyn}$ penalizes dynamic factors whose modulus is outside an $\epsilon$-ball. The inset figure shows an example spectrum we obtain using our loss penalties, where blue and red denote static and dynamic factors, respectively. 

\paragraph{Method Summary.} Given a batch $X \in \mathbb{R}^{b \times t \times m}$, we feed it to the encoder. Our encoder is similar to the one used in C-DSVAE~\citep{bai2021contrastively} having five convolutional layers, followed by a uni-directional LSTM module. The output of the encoder is denoted by $Z \in \mathbb{R}^{b \times t \times k}$, and it is passed to the Koopman module. Then, $Z$ is split to past $Z_p$ and future $Z_f$ observations, allowing to compute the approximate Koopman operator via $C = Z_p^{+} Z_f$. In addition, we compute $\tilde{Z}_f := Z_p \cdot C$ which will be used to compute $\mathcal{L}_\mathrm{pred}$. After the Koopman module, we apply the decoder whose structure mimics the encoder but in reverse having an LSTM component and de-convolutional layers. Additional details on the encoder and decoder are detailed in Tab.~\ref{tab:arch}. We decode $Z$ to obtain the reconstructed signal $X_\text{rec}$, and we decode $\tilde{Z}_f$ to approximate the future recovered signals $\tilde{X}_f$. The total loss is given by $\mathcal{L} = \lambda_\mathrm{rec} \mathcal{L}_\mathrm{rec} + \lambda_\mathrm{pred} \mathcal{L}_\mathrm{pred} + \lambda_\mathrm{eig} \mathcal{L}_\mathrm{eig}$, where the balance weights $\lambda_\mathrm{rec}, \lambda_\mathrm{pred}$ and $\lambda_\mathrm{eig}$ scale the loss penalty terms and the exact values are given in Tab.~\ref{tab:hyp}. To compute $\mathcal{L}_\mathrm{eig}$, we identify the static and dynamic subspaces. This is done by simply sorting the eigenvalues based on their modulus, and taking the last $k_s$ eigenvectors, whereas the rest $k_d$ are dynamic factors. Identifying multiple factors is more involved and can be obtained by manual inspection or via an automatic procedure using a pre-trained classifier, see App.~\ref{sec:sub_id}.

\paragraph{Multifactor Static and Dynamic Swap.} Similar to previous methods
our approach allows to swap between e.g., the static factors of two different input samples. In addition, our framework naturally supports multifactor swap as we describe next. For simplicity, we first consider the swap of a single factor (e.g., hair color in Sprites~\citep{reed2015deep}) for the given latent codes of two samples, $z_j(u)$ and $z_j(v), j=1,\dots,t+1$. Denote by $\phi_1$ the eigenvector of the factor we wish to swap, then a single swap is obtained by switching the Koopman projection coefficients of $\phi_1$, i.e.,
\begin{align} \label{eq:mf_swap_ab}
    \hat{z}_j(u) = \bar{z}_j^1(v) \phi_1 + \sum_{i=2}^k \bar{z}_j^i(u) \phi_i \ ,  \quad \hat{z}_j(v) = \bar{z}_j^1(u) \phi_1 + \sum_{i=2}^k \bar{z}_j^i(v) \phi_i \ ,
\end{align}
where $\hat{z}_j(u)$ denotes the new code of $z_j(u)$ using the swapped factor from the $v$ sample, and similarly for $\hat{z}_j(v)$. If several factors are to be swapped, then $\hat{z}_j(u) = \sum_{i \in I} \bar{z}_j^i(v) \phi_i + \sum_{i \in I^c} \bar{z}_j^i(u) \phi_i$, where $I$ denotes the set of eigenvector indices we swap, and $I^c$ is the complement set. The above formulation is equivalent to the simpler tensor notation $\bar{Z}^s[u, \; :, \; I^c] = \bar{Z}[u, \; :, \; I^c]$ and $\bar{Z}^s[u, \; :, \; I] = \bar{Z}[v, \; :, \; I]$, where $\bar{Z} \in \mathbb{C}^{n \times t \times k}$ is the Koopman projection coefficients of a batch with $n$ samples, and $\bar{Z}^s$ represents the swapped coefficients. Thus, the swapped latent code is given by $\hat{Z} = \bar{Z}^s \cdot \Phi$, where $\Phi = (\phi_i)$ is the matrix of eigenvectors organized in columns. A more detailed description of the swaps and how to obtain the disentangled subspaces representations is provided in App.~\ref{app:test_setup}.

\section{Results}
\label{sec:results}

We evaluate our model on several two- and multi-factor disentanglement tasks. For every dataset, we train our model, and for evaluation, we additionally train a vanilla classifier on the label sequences. In all experiments, we apply our model on mini-batches, extracting the latent codes $Z$ and the Koopman matrix $C$. Disentanglement tests use the eigendecomposition of $C$, where we identify the subspaces corresponding to the dynamic and static factors, denoted by $I_\code{dyn}$ and $I_\code{stat}$, respectively. We may label other subspaces such as $I_\code{h}$ to note they correspond to e.g., hair color change in Sprites. To identify the subspace corresponding to a specific factor we perform manual or automatic approaches (App.~\ref{app:test_setup}). Importantly, subspace's dimension of a single factor may be larger than one. We provide further details regarding the network architectures, hyperparameters, datasets, data pre-processing, and a comparison of computational resources (App.~\ref{app:test_setup}). Additional results are provided in App.~\ref{app:results}.

\begin{figure*}[t]
 \centering
 \begin{overpic}[width=1\linewidth]{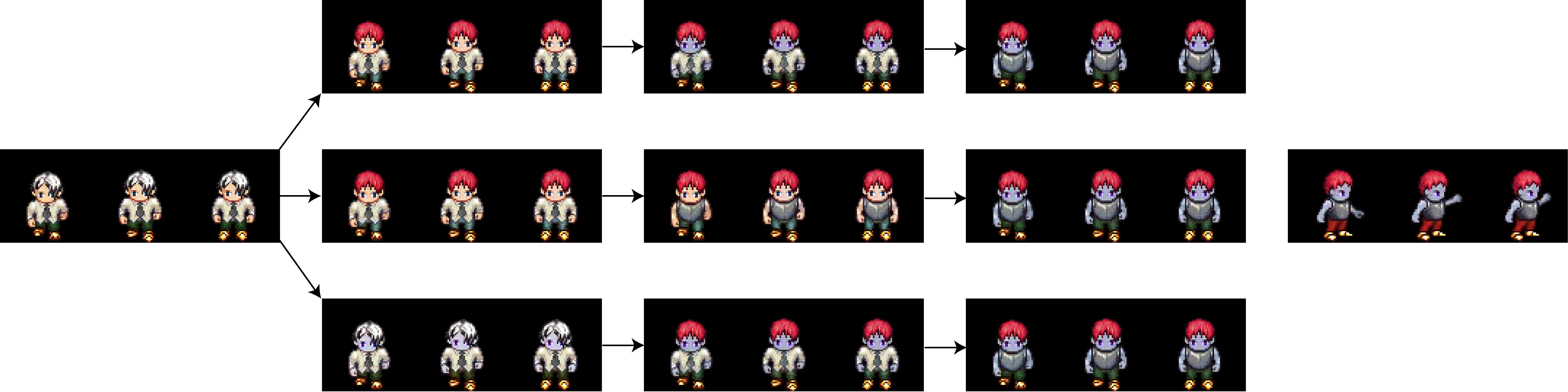} 
    \put(6, 16){source} \put(88, 16){target}
    \put(18.2, 18.5){\code{h}} \put(18.2, 13.5){\code{h}} \put(18.2, 5.5){\code{s}}
    \put(39, 23){\code{s}} \put(39, 13.5){\code{t}} \put(39, 4){\code{h}}
    \put(59.5, 23){\code{t}} \put(59.5, 13.5){\code{s}} \put(59.5, 4){\code{t}}
  \end{overpic}
  \caption{In the factorial swap experiment we modify individual static factors of the source character to match those of the target. The top row shows the gradual change of the hair, skin, and top colors.}
  \label{fig:sprites_factorial}
\end{figure*}

\subsection{Multifactor Disentanglement}

We will demonstrate that our method disentangles sequential data to multiple distinct factors, and thus it extends the toolbox introduced in competitive sequential disentanglement approaches which only supports two factor disentanglement. Specifically, while prior techniques separate to static and dynamic factors, we show that our model identifies several semantic static factors, allowing a finer control over the factored items for downstream tasks. We perform qualitative and quantitative tasks on the Sprites~\citep{reed2015deep} and MUG~\citep{5617662} datasets to show those advantages. 

\paragraph{Factorial swap.} This experiment demonstrates that our method is capable of swapping individual content components between sprite characters. We extract a batch with $32$ samples, and we identify by manual inspection the subspaces responsible for hair color, skin color, and top color, labeled by $I_{\code{h}}, I_{\code{s}}, I_{\code{t}}$. We select two samples from the test batch, shown as the source and target in Fig.~\ref{fig:sprites_factorial}. To swap individual static factors between the source and target, we follow Eq.~\ref{eq:mf_swap_ab}. Specifically, we gradually change the static features of the source to be those of the target. For example, the top row in Fig.~\ref{fig:sprites_factorial} shows the source being modified to have the hair color, followed by skin color, and then top color of the target, from left to right. In practice, this is achieved via setting $\bar{Z}^{\code{h}} = \bar{Z}^{\code{hs}} = \bar{Z}^{\code{hst}} = \bar{Z}_\mathrm{src}$ and assigning $\bar{Z}^{\code{h}}[:, \; I_\code{h}] = \bar{Z}_\mathrm{tgt}[:, \; I_\code{h}]$, $\bar{Z}^{\code{hs}}[:, \; I_\code{hs}] = \bar{Z}_\mathrm{tgt}[:, \; I_\code{hs}]$, and $\bar{Z}^{\code{hst}}[:, \; I_\code{hst}] = \bar{Z}_\mathrm{tgt}[:, \; I_\code{hst}]$, where $\bar{Z}_\mathrm{src}, \bar{Z}_\mathrm{tgt} \in \mathbb{C}^{8 \times 40}$ are the Koopman projection values of the source and target, respectively. The set $I_\code{hs} := I_\code{h} \cup I_\code{s}$, and similarly for $I_\code{hst}$. The tensor $\bar{Z}^\code{h}$ represents the new character obtained by borrowing the hair color of the target, and similarly for $\bar{Z}^{\code{hs}}$ and $\bar{Z}^{\code{hst}}$. In total, we demonstrate in Fig.~\ref{fig:sprites_factorial} the changes: \code{h}$\rightarrow$\code{s}$\rightarrow$\code{t} (top),  \code{h}$\rightarrow$\code{t}$\rightarrow$\code{s} (middle),  and \code{s}$\rightarrow$\code{h}$\rightarrow$\code{t} (bottom). We additionally show in Fig.~\ref{fig:multifactor_sprites} an example of individual swaps including all possible combinations. Our results display good multifactor separation and transfer of individual static factors between different characters.

To quantitatively assess the performance of our approach in the factorial swap task, we consider the following experiment. We iterate over test batches of size $256$, and for every batch we \emph{automatically} identify its hair color and skin color subspaces, $I_\code{h}, I_\code{s}$. Then, we compute a random sampling of $Z$ denoted by $J$, and separately swap the hair color and the skin color. In practice, this boils down to $\bar{Z}^{\code{h}}=\bar{Z}^{\code{s}}=\bar{Z}$ and setting $\bar{Z}^{\code{h}}[:, \; :, \; I_\code{h}] = \bar{Z}[J, \; :, \; I_\code{h}]$ and similarly, $\bar{Z}^{\code{s}}[:, \; :, \; I_\code{s}] = \bar{Z}[J, \; :, \; I_\code{s}]$. The new latent codes are reconstructed and fed to the pre-trained classifier, and we compare the predicted labels to the true labels of $Z[J]$. The results are reported in Tab.~\ref{tab:factorial_swap} where we list the accuracy measures for every factor. For most non-swapped factors, we obtain an accuracy score close to random guess, e.g., the skin accuracy in the hair swap is $16.25\%$ which is very close to $1/6$. Moreover, the swapped factors yield high accuracy scores marked in bold, validating the successful swap of individual factors.

\begin{table}[h!]
\centering
\caption{Accuracy measures of factorial swap experiments.}
\label{tab:factorial_swap}
\begin{tabular}{lccccc} 
    \toprule
    Test & action & skin & top & pants & hair \\
    \midrule
    hair swap & $10.51\%$ & $16.25\%$ & $16.33\%$ & $35.51\%$ & $\boldsymbol{90.59\%}$ \\ 
    skin swap & $10.55\%$ & $\boldsymbol{73.01\%}$ & $16.29\%$ & $30.55\%$ & $17.70\%$ \\
    \bottomrule
    \hline
\end{tabular}
\vspace{-2mm}
\end{table}

\begin{figure*}[t]
  \centering
  \includegraphics[width=1\linewidth]{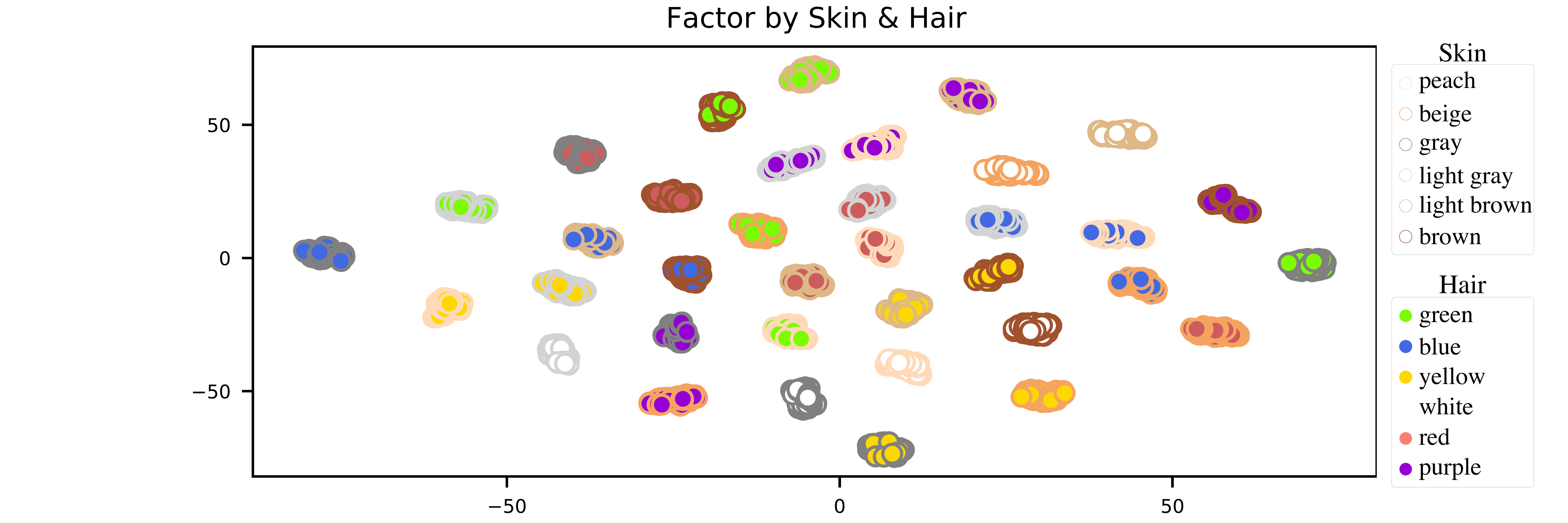}
  \caption{We show the \code{t-SNE} plot of the $4$D Koopman static subspace which encodes the skin and hair colors. The embedding perfectly clusters all (skin, hair) color combinations.}
  \label{fig:sprites_grid}
\end{figure*}

\paragraph{Latent Embedding.} We now explore the effect of our model on the latent representation of samples. To this end, we consider a batch $X$ of sprites where the motion, skin and hair colors are arbitrary, and the top and pants colors are fixed, for a total of $324$ examples. Following the above experiment, we automatically identify the subspaces responsible for changing the hair and skin color, $I_\code{h}, I_\code{s}$. To explore the distribution of the latent code, we visualize the Koopman projection coefficients of the $4$-dimensional subspace $I_\code{hs} = I_\code{h} \cup I_\code{s}$ given by $\bar{Z}[:, \; :, \; I_\code{hs}] \in \mathbb{C}^{324 \times 8 \times 4}$. We plot in Fig.~\ref{fig:sprites_grid} the $2$D embedding obtained using \code{t-SNE}~\citep{van2008visualizing}. To distinguish between skin and hair labels, we paint the face of every $2$D point based on its true hair label, and we paint the point's edge with the true skin color. The plot resembles a grid-like pattern, showing a \emph{perfect} separation to all $36$ unique combinations of (skin, hair) colors. We conclude that the Koopman subspace $I_\code{hs}$ indeed disentangles the samples based on either their skin or hair.

\paragraph{Incremental Swap.} In this test we explore multifactor features of time-varying Koopman subspaces on the MUG dataset. Given a source image $u$, we gradually modify its dynamic factors to be those of the target $v$. In practice, we compute $\bar{Z}[u, \; :, \; I_\code{q}] = \bar{Z}[v, \; :, \; I_\code{q}]$, where $I_q \subset I_\code{dyn}$ is an index set from $I_\code{dyn}$ such that $q \in \{1, 2, 3\}$ and $I_1 \subset I_2 \subset I_3 \subset I_\code{dyn}$. Specifically, $|I_1| = 4, |I_2| = 6, |I_3| = 32$. Fig.~\ref{fig:mug_incremental} shows the incremental swap results of two examples changing from disgust to happiness (left), and happiness to anger (right). The three rows below the source row are the reconstructions of the gradual swap denoted by $ \tilde{X}(I_\code{q}) := \mathcal{\chi}_\mathrm{dec}(\bar{Z}[u, \; :, \; I_q] \cdot \Phi)$. Our results demonstrate in both cases a non-trivial gradual change from the source expression to the target, as more dynamic features are swapped. For instance, the left source is mapped to a smiling character over all time samples in $\tilde{X}(I_\code{2})$, and then it is fixed to better match the happiness trajectory source in $\tilde{X}(I_\code{3})$. 

\begin{figure*}[t]
  \centering
  \begin{overpic}[width=.80\linewidth]{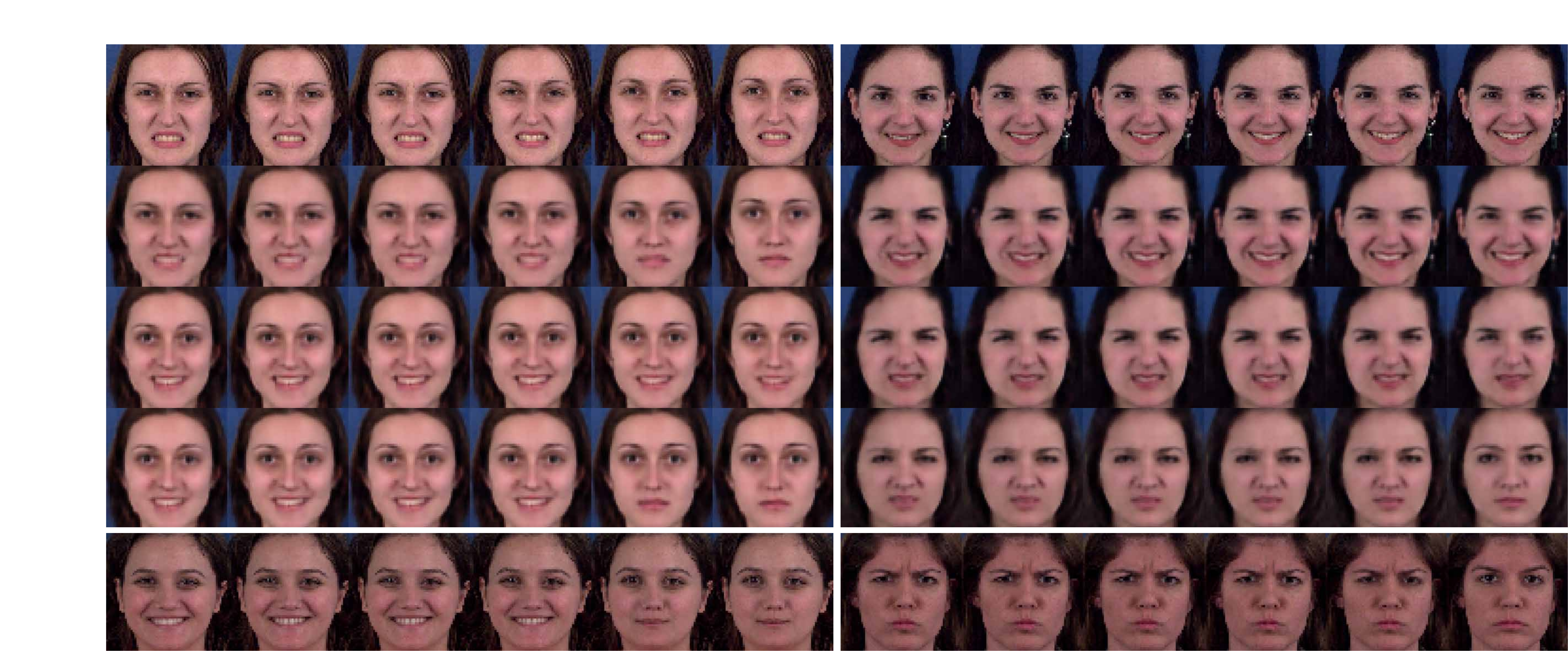} 
    \put(20, 40){disgust to happiness}
    \put(67, 40){happiness to anger}
    \put(-2, 34){source}
    \put(-2, 26){$\tilde{X}(I_\code{1})$}
    \put(-2, 18.5){$\tilde{X}(I_\code{2})$}
    \put(-2, 11){$\tilde{X}(I_\code{3})$}
    \put(-2, 3){target}
  \end{overpic}
  \caption{Our method allows to swap the dynamic features incrementally, and thus it achieves a relatively smooth transition between the source and target expressions.} 
  \label{fig:mug_incremental}
\end{figure*}

\setlength{\tabcolsep}{2pt}
\begin{table}[!b]
    \begin{minipage}{.5\linewidth}
        \caption{Disentanglement metrics on Sprites.}
        \label{tab:disentanglement_metrics_sprites}
        \begin{tabular}{lcccc}
            \toprule
            Method & Acc$\uparrow$ & IS$\uparrow$ & $H(y|x){\downarrow}$ & $H(y){\uparrow}$ \\
            \midrule
            MoCoGAN & $92.89\%$ & $8.461$ & $0.090$ & $2.192$ \\
            DSVAE & $90.73\%$ & $8.384$ & $0.072$ & $2.192$ \\
            R-WAE & $98.98\%$ & $8.516$ & $0.055$ & $2.197$ \\
            \midrule
            S3VAE & $99.49\%$ & $8.637$ & $0.041$ & $2.197$ \\
            C-DSVAE & $99.99\%$ & $8.871$ & $0.014$ & $2.197$ \\
            \midrule
            Ours & $\boldsymbol{100\%}$ & $\boldsymbol{8.999}$ & $\boldsymbol{\expnumber{1.6}{-7}}$ & $\boldsymbol{2.197}$ \\
            \bottomrule
            \hline
        \end{tabular}
    \end{minipage}
    \hfill
    \begin{minipage}{.5\linewidth}
        \caption{Disentanglement metrics on MUG.}
        \label{tab:disentanglement_metrics_mug}
        \begin{tabular}{lcccc}
            \toprule
            Method & Acc$\uparrow$ & IS$\uparrow$ & $H(y|x){\downarrow}$ & $H(y){\uparrow}$ \\
            \midrule
            MoCoGAN & $63.12\%$ & $4.332$ & $0.183$ & $1.721$ \\
            DSVAE & $54.29\%$ & $3.608$ & $0.374$ & $1.657$ \\
            R-WAE & $71.25\%$ & $5.149$ & $0.131$ & $1.771$ \\
            \midrule
            S3VAE & $70.51\%$ & $5.136$ & $0.135$ & $1.760$ \\
            C-DSVAE & $\boldsymbol{81.16\%}$ & $5.341$ & $0.092$ & $\boldsymbol{1.775}$ \\
            \midrule
            Ours & $77.45\%$ & $\boldsymbol{5.569}$ & $\boldsymbol{0.052}$ & $1.769$ \\
            \bottomrule
            \hline
        \end{tabular}
    \end{minipage}
\vspace{-3mm}
\end{table}

\subsection{Two Factor Disentanglement of Image Data}
We perform two factor disentanglement on Sprites and MUG datasets, and we compare with state-of-the-art methods. Evaluation is performed by fixing the time-varying features of a test batch while randomly sampling its time-invariant features. Then, a pre-trained classifier generates predicted labels for the new samples while comparing them to the true labels. We use metrics such as accuracy (Acc), inception score (IS), intra-entropy $H(y|x)$ and inter-entropy $H(y)$~\citep{bai2021contrastively}. We extract batches of size $256$, and we identify their static and dynamic subspaces automatically. In contrast to most existing work, our approach is not based on a variational autoencoder model, and thus the sampling process in our approach is performed differently. Specifically, for every test sequence, we randomly sample static features by generating a new latent code based on a random sampling in the convex hull of the batch. That is, we generate random coefficients $\{\alpha_i\}$ for every sample in the batch such that they form a partition of unity and $\alpha_i \in [0, 1]$. Then, we swap the static features of the batch with those of the new samples, $\bar{Z}[:, \; :, I_\code{stat}] = \sum_i \alpha_i \bar{Z}[i, \; :, \; I_\code{stat}]$. We perform $300$ epochs of random sampling, and we report the average results in Tab.~\ref{tab:disentanglement_metrics_sprites}, \ref{tab:disentanglement_metrics_mug}. Notably, our method outperforms previous SOTA methods on the Sprites dataset across all metrics. On the MUG dataset, we achieve competitive accuracy results and better results on IS and $H(y|x)$ metrics. In comparison to unsupervised methods MoCoGAN, DSVAE and R-WAE, our results are the best on all metrics.

\subsection{Two Factor Disentanglement of Audio Data} 
\label{sec:res_audio}

We additionally evaluate our model on a different data modality, utilizing a benchmark downstream speaker verification task~\citep{hsu2017unsupervised} on the TIMIT dataset~\citep{timit}. In this task, we aim to distinguish between speakers, independently of the text they read. We compute for each test sample its latent representation $Z$, and its dynamic and static sub-representations $Z_\code{dyn} , Z_\code{stat}$, respectively. In an ideal two factor disentanglement, we expect $Z_\code{stat}$ to encode the speaker identity, whereas $Z_\code{dyn}$ should be agnostic to this data. To quantify the disentanglement we employ the Equal Error Rate (EER) test. Namely, we compute the cosine similarity between all pairs of latent sub-representations in $Z_\code{stat}$. The pair is assumed to encode the same speaker if their cosine similarity is higher than a threshold $\epsilon \in [0, 1]$, and the pair has different speakers otherwise. The threshold $\epsilon$ needs to be calibrated to receive the EER~\citep{chenafa2008biometric}. If $Z_\code{stat}$ indeed holds the speaker identity, then its EER score should be low. The same test is also repeated on $Z_\code{dyn}$ for which we expect high EER scores as it should not contain speaker information.We report the results in Tab.~\ref{tab:disentanglement_metrics_timit}. Our method achieves the third best overall EER on the static and dynamic tests. However, S3VAE and C-DSVAE either use significantly more data or self-supervision signals. We label by C-DSVAE$^*$ and C-DSVAE$^\dagger$ the approach C-DSVAE without content and dynamic augmentation, respectively. When comparing to unsupervised approaches that do not use additional data (FHVAE, DSVAE, and R-WAE), we achieve the best results with a margin of $0.27\%$ and $3.37\%$ static and dynamic, respectively.

\begin{table}[h!]
\centering
\caption{Disentanglement metrics on TIMIT.}
\label{tab:disentanglement_metrics_timit}
\resizebox{0.95\textwidth}{!}{%
\begin{tabular}{lccc|cccc|c} 
    \toprule
    Method & FHVAE & DSVAE & R-WAE & S3VAE & C-DSVAE$^\ast$ & C-DSVAE$^\dagger$ & C-DSVAE & Ours \\ 
    \midrule
    Static EER$\downarrow$ & $5.06\%$ & $5.65\%$ & $4.73\%$ & $5.02\%$ & $5.09\%$ & $4.31\%$ & $\boldsymbol{4.03\%}$ & $4.46\%$ \\ 
    Dynamic EER$\uparrow$ & $22.77\%$ & $19.20\%$ & $23.41\%$ & $25.51\%$ & $24.30\%$ & $31.09\%$ & $\boldsymbol{31.81\%}$ & $26.78\%$ \\
    \bottomrule
    \hline
\end{tabular} }
\end{table}

\subsection{Ablation Study}
\label{subsec:ablation}

We train different models to evaluate the effect of our loss term on the KAE architecture: full model with $\mathcal{L}_\mathrm{eig}$, KAE + $\mathcal{L}_\mathrm{stat}$, KAE + $\mathcal{L}_\mathrm{dyn}$, and baseline KAE without $\mathcal{L}_\mathrm{eig}$. All other parameters are left fixed. In Fig.~\ref{fig:sprites_ablation}, we show a qualitative example of static and dynamic swaps between the source and the target. Each of the bottom four rows in the plot is associated with a different model. The full model ($\mathcal{L}_\mathrm{eig}$) yields clean disentanglement results on both swaps. In contrast, the static features are not perfectly swapped when removing the dynamic penalty ($\mathcal{L}_\mathrm{stat}$). Moreover, the model without static loss ($\mathcal{L}_\mathrm{dyn}$) does not swap the static features at all. Finally, the baseline KAE model generates somewhat random samples. We note that in all cases (even for the KAE model), the motion is swapped relatively well which can be attributed to the good encoding of the dynamics via the Koopman matrix.

\begin{figure*}[t]
  \centering
  \begin{overpic}[width=.8\linewidth]{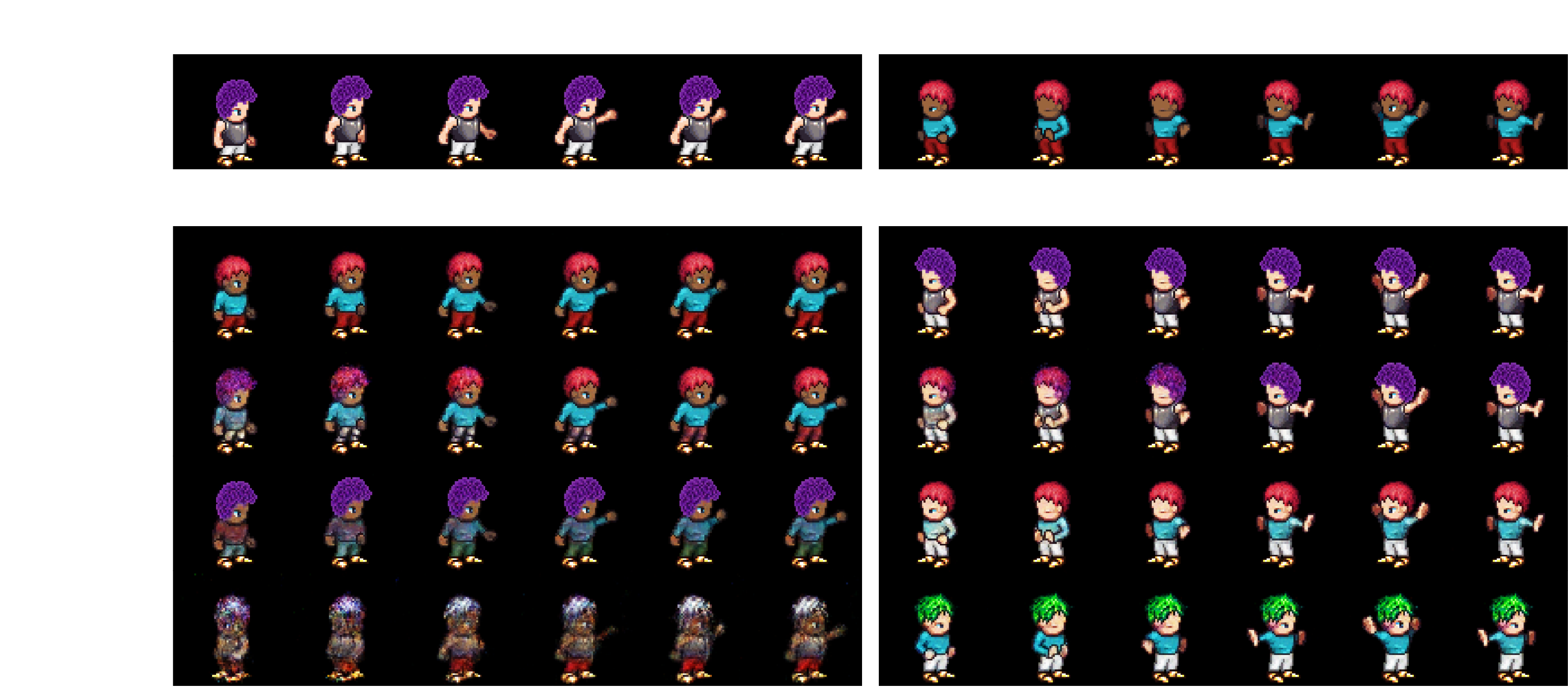} 
    \put(30, 41) {source} \put(75.5, 41) {target}
    \put(28, 30) {static swap} \put(71, 30) {dynamic swap} 
    \put(3, 25) {$\mathcal{L}_\mathrm{eig}$}
    \put(3, 17.5) {$\mathcal{L}_\mathrm{stat}$}
    \put(3, 10) {$\mathcal{L}_\mathrm{dyn}$}
    \put(3, 2.5) {KAE}
  \end{overpic}
  \caption{Our ablation study shows that the full model $\mathcal{L}_\mathrm{eig}$ disentangles data well, whereas models using only $\mathcal{L}_\mathrm{stat}$ loss or only $\mathcal{L}_\mathrm{dyn}$ loss or no $\mathcal{L}_\mathrm{eig}$ loss at all struggle with swapping static features.}  
  \label{fig:sprites_ablation}
\end{figure*}

\section{Discussion}

We have proposed a novel approach for multifactor disentanglement of sequential data, extending existing two factor methods. Our model is based on a strong inductive bias where we assumed that the underlying dynamics can be encoded linearly. The latter assumption calls for exploiting recent Koopman autoencoders which we further enhance with a novel spectral loss term, leading to an effective disentangling model. Throughout an extensive evaluation, we have shown new disentanglement sequential tasks such as factorial swap and incremental swap. In addition, our approach achieves state-of-the-art results on two factor tasks in comparison to baseline unsupervised approaches, and it performs similarly to self-supervised and weakly-supervised techniques. 

There are multiple directions for future research. First, our approach is complementary to most existing VAE approaches, and thus merging features of our method with variational sampling, mutual information and contrastive losses could be fruitful. Second, theoretical aspects such as disentanglement guarantees could be potentially shown in our framework using Koopman theory.

\clearpage

\section*{Acknowledgements}

This research was partially supported by the Lynn and William Frankel Center of the Computer Science Department, Ben-Gurion University of the Negev, an ISF grant 668/21, an ISF equipment grant, and by the Israeli Council for Higher Education (CHE) via the Data Science Research Center, Ben-Gurion University of the Negev, Israel.

\bibliographystyle{iclr2023_conference}
\bibliography{refs}
\clearpage

\appendix

\section{Koopman Theory}
\label{app:ktheory}

We briefly introduce the key ingredients of Koopman theory~\citep{koopman1931hamiltonian} which are related to our work. Consider a dynamical system $\varphi: \mathcal{M} \rightarrow \mathcal{M}$ over the domain $\mathcal{M}$ given via the update rule
\[
    x_{t+1} = \varphi(x_t) \ , 
\]
where $x_t \in \mathcal{M} \subset \mathbb{R}^m$, and $t \in \mathbb{N}$ is the time index. Koopman theory proposes an alternative representation of the dynamical system $\varphi$ by a \emph{linear} yet infinite-dimensional Koopman operator $\mathcal{K}_\varphi$. Formally, 
\[
    \mathcal{K}_\varphi f(x_t) = f \circ \varphi(x_t) \ ,
\]
where $f:\mathcal{M} \rightarrow \mathbb{C}$ is an observable complex-valued function, and $f \circ \varphi$ denotes composition of transformations. Due to the linearity of $\mathcal{K}_\varphi$, we can discuss its eigendecomposition, when it exists. Specifically, let $\lambda_j \in \mathbb{C}, \phi_j: \mathcal{M} \rightarrow \mathbb{C}$ be a pair of eigenvalue and eigenfunction respectively of $\mathcal{K}_\varphi$, i.e., it holds that
\[
    \mathcal{K}_\varphi \phi_j = \lambda_j \phi_j \, \quad \text{for any} \; j \ .
\]
From a theoretical viewpoint, there is no loss of information to represent the dynamics with $\varphi$ or with $\mathcal{K}_\varphi$~\citep{eisner2015operator}. Namely, one can recover the dynamics $\varphi$ from a given $\mathcal{K}_\varphi$ operator. Moreover, the Hartman-Grobman Theorem states that the linearization around hyperbolic fixed points is conjugate to the full, nonlinear system~\citep{wiggins2003introduction}. The latter result was further extended to the entirety of the basin~\citep{lan2013linearization}. In practice, various tools were recently developed to approximate the infinite-dimensional Koopman operator using a finite-dimensional Koopman matrix. In particular, the Dynamic Mode Decomposition (DMD)~\citep{schmid2010dynamic} is a popular technique for approximating dynamical systems and their modes. DMD was shown to be intimately related to Koopman mode decomposition in~\citep{rowley2009spectral}, which deals with the extraction of Koopman eigenvalues and eigenfunctions in a data-driven setting. Thus, the above discussion establishes the link between our work and Koopman theory since in practice, our Koopman module is similar in spirit to DMD. Moreover, it justifies our use of the Koopman matrix to encode the dynamics as well as disentangle it.

\section{Experimental Setup: Architecture, Datasets, Hyperparameters, and More}
\label{app:test_setup}

\subsection{Datasets}

\paragraph{Sprites.} \cite{reed2015deep} introduced a dataset of animated cartoon characters. Each character is composed of static and dynamic attributes. The static attributes include the color of skin, tops, pants and hair; each contains six possible variants. 
The dynamic attributes include three different motions: walking, casting spells and slashing, where each motion admits three different orientations: left, right, and forward. In total there are nine motions a character can perform and $1296$ unique characters. A sequence is composed of eight RGB image frames of size of $64 \times 64$. We use $9000$ samples for training and $2664$ samples for testing.

\paragraph{MUG.} \cite{5617662} share a facial expression dataset which contains image sequences of 52 subjects. Each subject performs six facial expressions: anger, fear, disgust, happiness, sadness and surprise. Each video in the dataset consists of $50$ to $160$ frames. To create sequences of length $15$ as described in previous work ~\citep{bai2021contrastively}, we randomly sample $15$ frames from the original sequence. Then, we crop the faces using Haar Cascades face detection, and we resize to $64 \times 64$ resulting in sequences $x \in \mathbb{R}^{15 \times 3 \times 64 \times 64}$ for a total of 3429 samples. Finally, we split the dataset such that $75\%$ of it is used for the train set, and $25\%$ for the test set.

\paragraph{TIMIT.} \cite{timit} made TIMIT available which contains $16$kHz audio recordings of American English speakers reading short texts. In total, the dataset has $6300$ utterances ($5.4$ hours) aggregated from $630$ speakers reading $10$ phonetically rich sentences each. For each batch of samples the data pre-processing procedure goes as follows: First, we take the maximum raw audio length in the batch, and we zero pad all samples to match that length. Second, we calculate for each sample its log spectrogram with $201$ frequency features calculated by a window of $10$ms, using Short Time Fourier Transform (STFT). Thus, each batch has its own $t$ (time steps) length, with an average length after padding of $t=450$. The resulting sequences are of dimension $x \in \mathbb{R}^{t \times 201}$.

\subsection{Disentanglement Metrics}

\paragraph{Accuracy} (Acc) measures how well a model preserves the fixed features while sampling the others. We compute it using a pre-trained classifier $\mathfrak{C}$ (also called judge) which is trained on the same train set and tested on the same test set as our model. The classifier outputs the probability measures per feature of the dataset. For instance, $\mathfrak{C}$ outputs one label for the pose and additional labels for each of the static factors (hair, skin, top and pants) for the Sprites dataset. 

\paragraph{Inception Score} (IS) measures the performance of a generator. The score is calculated by first applying the judge $\mathfrak{C}$ on every generated sequence $x_{1:t}$, yielding the conditional predicted label distribution $p(y | x_{1:t})$. Then, given the marginal predicted label distribution $p(y)$ we compute the Kullback–-Leibler (KL) divergence $\text{KL}\left(p(y | x_{1:t}) \; || \; p(y) \right)$. The inception score is given by:
\begin{align*}
    \text{IS} = \exp(\mathbb{E}_x \left[\text{KL}(p(y | x_{1:T})) \; || \; p(y)\right]) \ .
\end{align*}

\paragraph{Intra-Entropy} $H(y|x)$ measures the conditional predicted label entropy of all generated sequences. To obtain the predicted labels we use the judge $\mathfrak{C}$, and we compute $\frac{1}{b} \sum_{i=1}^b H(p(y | x^i_{1:t}))$ where $b$ is the number of generated sequences. Lower intra-entropy score reflects higher confidence of the classifier $\mathfrak{C}$.

\paragraph{Inter-Entropy} $H(y)$ measures the marginal predicted label entropy of all generated sequences. We can compute $H(p(y))$ using the judge's output on the predicted labels $\{ y \}$. Higher inter-entropy score reflects higher diversity among the generated sequences.

\paragraph{Equal Error Rate} (EER) is used in the speaker verification task on the TIMIT dataset. It is the value of false positive rate or false negative rate of a model over the speaker verification task, when the rates are equal.   

\begin{table}[!b]
\centering
\caption{Architecture details.}
\label{tab:arch}
\resizebox{0.9\textwidth}{!}{%
\begin{tabular}{l|l} 
    \toprule
    Encoder & Decoder \\
    \midrule
    $64 \times64 \times 3$ image & Z \\
    Conv2D$(3,32,4,2,1) \rightarrow$ BN2D$(32) \rightarrow$ LeakyReLU & LSTM$(k, h)$ \\
    Conv2D$(32,64,4,2,1) \rightarrow$ BN2D$(64) \rightarrow$ LeakyReLU & Conv2DT$(h, 256, 4, 1, 0) \rightarrow$ BN2D$(256) \rightarrow$ LeakyReLU \\
    Conv2D$(64,128, 4,2,1) \rightarrow$ BN2D$(128) \rightarrow$ LeakyReLU & Conv2DT$(256, 128, 4, 1, 0) \rightarrow$ BN2D$(128) \rightarrow$ LeakyReLU \\
    Conv2D$(128, 256, 4,2,1) \rightarrow$ BN2D$(256) \rightarrow$ LeakyReLU & Conv2DT$(128, 64, 4, 1, 0) \rightarrow$ BN2D$(64) \rightarrow$ LeakyReLU \\
    Conv2D$(256, k, 4,2,1) \rightarrow$ BN2D$(k) \rightarrow$ LeakyReLU & Conv2DT$(64, 32, 4, 1, 0) \rightarrow$ BN2D$(32) \rightarrow$ LeakyReLU \\
    LSTM$(k, k)$ & Conv2DT$(32, 3, 4, 1, 0) \rightarrow$ Sigmoid \\
    \bottomrule
\end{tabular}}
\end{table}

\subsection{Architecture and Hyperparameters}

Our models are implemented in the PyTorch~\citep{NEURIPS2019_9015} framework. We used Adam optimizer \citep{kingma2014adam} and a learning rate of $0.001$ for all models, with no weight decay. Regarding hyper-parameters, in our experiments, $k$ is tuned between $40$ and $200$ and  $\lambda_\mathrm{rec}, \lambda_\mathrm{pred}$ and $\lambda_\mathrm{eig}$ are tuned over $\{ 1, 3, 5, 10, 15, 20 \}$. $k_s$ is tuned between $4$ and $20$, and the $\varepsilon$ threshold for the dynamic loss is tuned over $\{0.4, 0.5, 0.55, 0.6, 0.65\}$. The hyper-parameters are chosen through standard grid search.

\begin{table}[t!]
\centering
\caption{Hyperparameter details.}
\label{tab:hyp}
\begin{tabular}{lcccc|ccccc} 
    \toprule
    Dataset & $b$  & $k$ & $h$ & $\#$epochs & $\lambda_\mathrm{rec}$ & $\lambda_\mathrm{pred}$ & $\lambda_\mathrm{eig}$ & $k_s$ & $\epsilon$ \\ 
    \midrule
    Sprites & $32$ & $40$ & $40$ & $800$ & $15$ & $1$ & $1$ & $8$ & $0.5$ \\ 
    MUG & $16$ & $40$ & $100$ & $1000$ & $20$ & $1$ & $1$ & $5$ & $0.5$ \\ 
    TIMIT & $30$ & $165$ & - & $400$ & $15$ & $3$ & $1$ & $15$ & $0$ \\ 
    \bottomrule
\end{tabular}
\end{table}

\subsubsection{Encoder and Decoder}

\paragraph{Sprites and MUG.} Our encoder and decoder follow the same general structure as in \cite{bai2021contrastively}. First we have the same convolutional encoder as in C-DSVAE. Then we have a uni-directional LSTM.  The architecture is described in detail in Tab.~\ref{tab:arch}, where Conv2D and Conv2DT denote a 2D convolution layer and its transpose, and BN2D is a 2D batch normalization layer. Additionally, the hyperparameters are listed in Tab.~\ref{tab:hyp}, where $b$ is the batch size, $k$ is the size of Koopman matrix, $h$ is the dimension of the LSTM hidden state, and $\#$epochs is the number of epochs we used for training. The balance weights $\lambda_\mathrm{rec}, \lambda_\mathrm{pred}$ and $\lambda_\mathrm{eig}$ scale the loss penalty terms of the Koopman layer, $\mathcal{L}_\mathrm{rec}, \mathcal{L}_\mathrm{pred}$ and $\mathcal{L}_\mathrm{eig}$, respectively. Finally, $k_s$ is the amount of static factors, and $\epsilon$ is the dynamic threshold, see Eqs.~\ref{eq:loss_stat} and \ref{eq:loss_dyn} in the main text.

\paragraph{TIMIT.} We design a neural network related to DSVAE architecture, but we use a uni-directional LSTM module instead of a bi-directional layer. The encoder LSTM input dimension is $201$ which is the spectrogram features dimension and its output dimension is $k$. The decoder LSTM input dimension is $k$ and its output dimension is $201$. The hyperparameter values are detailed in Tab.~\ref{tab:hyp}.

\subsubsection{Koopman Layer}
The Koopman layer in our architecture is responsible for calculating the Koopman matrix $C$, and it is associated with the accompanying losses $\mathcal{L}_\mathrm{rec}, \mathcal{L}_\mathrm{pred}, \mathcal{L}_\mathrm{eig}$. It may happen that the latent codes provided to the Koopman module are very similar, leading to numerically unstable computations. To alleviate this issue, we consider two possibilities. One, use blur filter on the image before inserting it to the encoder (used for the Sprites datasets). Two, add small random uniform noise sampled from $[0, 1]$ to the latent code $Z$, i.e., $Z + 0.005 N$, where $N$ denotes the noise (used on TIMIT). Both options yield more diverse latent encodings, which in turn, stabilize the computation of $C$ and the training procedure. Finally, we note that our spectral penalty terms $\mathcal{L}_\mathrm{stat}$ and $\mathcal{L}_\mathrm{dyn}$ which compose $\mathcal{L}_\mathrm{eig}$ are stable for a large regime of hyperparameter ranges.

\subsubsection{Additional Dynamic Loss Options}

The proposed form of $\mathcal{L}_\mathrm{dyn}$ in Eq.~\ref{eq:loss_dyn} constrains the dynamic factor modulus to an $\epsilon$-ball to promote separation between the static factors located on the point $1 + \imath0$ and the dynamic factors. However, there are settings for which $\mathcal{L}_\mathrm{dyn}$ may be not optimal. For instance, a dataset may contain measure-preserving dynamic factors, e.g., as in the motion of a pendulum. Another example includes growing dynamic factors, e.g., as in a ball moving from the center of the frame towards the boundaries of the frame. If one has additional knowledge regarding the underlying dynamics, one may adapt $\mathcal{L}_\textrm{dyn}$ accordingly. We consider the following options:
\begin{enumerate}
    \item Set $\epsilon=1$ while adding the dynamic loss term to $\mathcal{L}_\mathrm{eig}$. In this case, $\mathcal{L}_\mathrm{dyn}$ penalizes dynamic factors that are inside a $\delta$-ball around the point $1 + 0\imath$. This option addresses measure-preserving dynamic oscillations in the data.

    \item Set $\epsilon=1 + \eta, \; \eta>0$ while adding the dynamic loss term to $\mathcal{L}_\mathrm{eig}$. In this case, $\mathcal{L}_\mathrm{dyn}$ penalizes dynamic factors that are inside a $\delta$-ball around the point $1 + 0\imath$. This option addresses growing dynamic factors.
\end{enumerate}

\subsection{Disentanglement Process using Multifactor Disentangling Koopman Autoencoders} 

In what follows, we detail the process of extracting the multifactor latent representation of a sample, and in addition, we will demonstrate a general swap of a factor between two arbitrary samples. We let $X \in \mathbb{R}^{b \times t \times m}$ be our input batch and $x \in \mathbb{R}^m$ be a single sample that we want to calculate its multifactor disentangled latent representation. The disentanglement process of $x$  into its multiple latent factors representations using our model contains the following steps:
\begin{enumerate}
    \item We insert $X$ into the model encoder and get the encoder output $Z \in \mathbb{R}^{b \times t \times k}$.
    \item We compute the Koopman matrix $C$ for the batch $X$ using the Koopman layer as described in the main text.
    \item We compute the eigendecomposition of $C$ to get the eigenvectors matrix $V\in \mathbb{C}^{k \times  k}$. In addition, we calculate $U = V^{-1}$. Now we calculate $\bar{z}^T := z^T V$ for every $z \in \mathbb{R}^k$. $\bar{z}$ stores the coefficients in the Koopman space and they are the disentangled latent representation in our method. Notice that $z^T = z^T VU = \bar{z}^T U$
    \item We identify the indices that correspond to each latent factor. It may be that several indices represent one factor. We use the identification method of subspaces described in~\ref{sec:sub_id} to extract the indices set. Let $I_k$ be some latent factor index set. Then, the latent representation of factor $I_k$ for the input $x$ is $\bar{z}[I_k]$. For instance, $I_k$ can be the hair color factor. If we want to take a group of factors, we can aggregate a few factors together $I = I_s \cup I_t \cup I_h \cup I_p$, where $I_s =$ skin indices, $I_t = $ top indices, $I_h = $ hair indices, $I_p =$ pants indices. In practice $I$ encodes a character identity on the Sprites dataset.
\end{enumerate}
To conclude, these four steps describe the process of disentangling arbitrary factors in our setup. To demonstrate a swap, let us assume we use the Sprites dataset. Let $x_1, x_2$ be two samples in $X$ and let us assume we want to swap their hair and skin attributes. We will use steps 1, 2 and 3 to extract $x_1, x_2$ multifactor latent representation $\bar{z}_1,\bar{z}_2$. Then, using Step 4, we will identify and extract $I_{K} = I_s \cup I_h$, were $I_h = $ hair indices and $I_s =$ skin indices. Now, we want to swap the latent representations of the hair and skin factors between the sample. To do so, we simply preform $\bar{z}_1[I_{K}] = \bar{z}_2[I_{K}]$ and vice versa $\bar{z}_2[I_{K}] = \bar{z}_1[I_{K}]$ in parallel. 

To get back to the pixel space, we need to repeat our steps backward. First we need to compute the new $z_i$ after the swap. We will do it using the $V$ matrix we calculated in step 3. We compute $\tilde{z}_1^T = \bar{z}_1^T V, \tilde{z}_2^T = \bar{z}_2^T V$. Finally, we can insert $\text{Re}(\tilde{z}_1), \text{Re}(\tilde{z}_2)$ as inputs to the model decoder and get the desired swapped new samples $\tilde{x}_1, \tilde{x}_2$. Last note, if $z$ is some input for the model decoder then $z$ must be real-valued, however, $z$ is typically complex-valued since $V, U \in \mathbb{C}^{k \times k}$. Thus, we keep the real part of $z$, and we eliminate its imaginary component.

In what follows, we show that $\text{Re}(z^T V U) = z^T$, and thus feeding the real part to the decoder as mentioned above is well justified. Moreover, a similar proof holds for swapped latent vectors, i.e., $\text{Im}(\tilde{z}) = 0$. Finally, we validated that standard numerical packages such as Numpy and pyTorch satisfy this property up to machine precision.

\begin{thm}
If $C \in \mathbb{R}^{k \times k}$ is full rank, then $\text{Re}(z^T V U) = z^T$ for any $z \in \mathbb{R}^k$, where $V$ is the matrix of eigenvectors of $C$, and $U = V^{-1}$.
\end{thm}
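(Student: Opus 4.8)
The plan is to reduce the statement to the single algebraic fact that $VU$ is the identity, after which the real-part extraction becomes vacuous. First I would record that, the statement already presupposing $U := V^{-1}$, the eigenvector matrix $V \in \mathbb{C}^{k \times k}$ is invertible and $VU = I_k$ holds \emph{exactly} as an identity of complex matrices. This is the only structural input needed; no appeal to the particular eigenvalues or to the Koopman interpretation of $V$ is required.

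Given $VU = I_k$, the core computation is immediate: for any $z \in \mathbb{R}^k$ one has $z^T V U = z^T (VU) = z^T I_k = z^T$, with the product telescoping exactly rather than approximately. The point worth emphasizing --- and, I suspect, the reason this is recorded as a theorem at all --- is that the intermediate object $\bar{z}^T = z^T V$ is genuinely complex, so a priori the reconstruction $\bar{z}^T U$ could carry a nonzero imaginary part. The identity $VU = I_k$ forces that imaginary part to vanish identically: since $z^T$ is a real row vector, $z^T V U = z^T$ yields $\mathrm{Im}(z^T V U) = 0$ and hence $\mathrm{Re}(z^T V U) = z^T$. This is exactly what licenses discarding the imaginary component before feeding the decoder, as done in the pipeline above.

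Consequently there is no genuine obstacle in the stated (non-swapped) claim itself: the only care required is the existence of $U = V^{-1}$, which amounts to $C$ being diagonalizable, the generic situation in which the $k$ eigenvectors are linearly independent. The full-rank hypothesis guarantees all eigenvalues are nonzero --- which is what makes the encoded dynamics invertible --- but it is the invertibility of $V$, rather than of $C$, on which $VU = I_k$ relies. The place where a nontrivial argument is actually needed is the companion assertion in the remark, namely $\mathrm{Im}(\tilde{z}) = 0$ for a \emph{swapped} latent vector, where Koopman coefficients of two distinct samples are mixed before multiplying by $V$. There the clean telescoping no longer applies, and I would instead exploit that $C$ is real, so its complex eigenvalues and eigenvectors occur in conjugate pairs; the columns of $V$ and the associated coefficients are then closed under complex conjugation up to a fixed permutation, and pairing each complex mode with its conjugate shows the imaginary contributions cancel. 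That pairing argument is the technical heart of the swapped generalization, even though it lies just outside the statement proved here.
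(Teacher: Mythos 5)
Your proof is correct, and it is genuinely more elementary than the paper's. You reduce everything to the exact identity $VU = I_k$, so that $z^T V U = z^T$ holds as complex vectors and the real-part extraction is vacuous; you also correctly isolate that the hypothesis doing the work is invertibility of $V$ (i.e., diagonalizability of $C$), not full rank of $C$, which the theorem statement conflates. The paper takes a different route: it expands $z^T V U = \sum_{j=1}^k \langle z, v_j \rangle u_j$ and proves realness mode-by-mode, using that $C$ is real so eigenvectors occur in conjugate column pairs $v_1 = \overline{v_2}$, and then showing via the permutation identity $VP = \overline{V}$, hence $P^T U = \overline{U}$, that the corresponding rows of $U$ are conjugate too, so each pair $\langle z, v_1 \rangle u_1 + \langle z, v_2 \rangle u_2$ is real. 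That heavier machinery buys exactly what your telescoping cannot: each conjugate pair of modes contributes a real vector \emph{regardless of which real vector the coefficients came from}, which is what justifies the paper's accompanying claim that $\text{Im}(\tilde{z}) = 0$ for swapped latent codes, where projection coefficients of two different samples are mixed and $z^T V U$ no longer collapses to $z^T$. Your closing paragraph identifies precisely this gap and sketches the same conjugate-pairing argument, so in effect you give a shorter proof of the literal statement while recognizing that the paper's proof is really aimed at the swapped generalization; the one caveat is that the pairing argument (yours and the paper's) additionally needs the swapped coefficients to respect conjugate symmetry, which holds because they are themselves projections of real vectors onto conjugate eigenvector pairs.
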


\begin{proof}
It follows that
\[ z^T V U = \sum_{j=1}^k \langle z, v_j \rangle u_j \ ,
\]
where $v_j$ is the $j$-th column of $V$, and $u_j$ is the $j$-row of $U$. To prove that $\text{Im}(z^T V U) = 0$, it is sufficient to show that if $v_{1}$ and $v_{2}$ are complex conjugate pair of vectors from $V$, i.e., $v_{i_1} = \overline{v_{i_2}}$, then $\langle z, v_1 \rangle u_1$ is the complex conjugate of $\langle z, v_2 \rangle u_2$. First, we have that
\[ a_1 = \langle z, v_1 \rangle = \sum_i^k z[i] v_1[i] = \sum_i^k z[i] \overline{v_2[i]} = \langle z, \overline{v_2} \rangle = \overline{a_2} \ ,
\]
where the third equality holds since $v_1 = \overline{v_2}$, and the last equality holds since $z$ is real-valued. The proof is complete if we show that $u_1 = \overline{u_2}$, since then we have $\langle z, v_1 \rangle u_1 = \overline{\langle z, v_2 \rangle u_2}$. To verify that complex conjugate column pairs transform to complex conjugate row pairs, we assume w.l.o.g that the matrix $V$ can be organized such that nearby columns are complex conjugates, i.e., $v_1 = \overline{v_2}, v_3 = \overline{v_4}$, and so on. Let $P$ be the permutation matrix that exchanges the columns of $V$ to their complex conjugates, i.e., it switches the $i$-th column with the $(i+1)$-th column, where $i$ is odd. Then $V P = \overline{V}$. It follows that
\[ \left( V P \right)^{-1} = P^T V^{-1} = P^T U  = \overline{U} \ ,
\]
namely, the $i$-th row is the complex conjugate of the $(i+1)$-th row, where $i$ is an odd number.
\end{proof}

\subsection{Identification of Subspaces}
\label{sec:sub_id}
There are two scenarios in which we need to identify semantic Koopman subspaces in the eigenvectors of the Koopman matrix $C$: 
\begin{enumerate}
    \item separate between static and dynamic information (\emph{two factor separation}).
    \item identify individual factors, e.g., hair color in sprites (\emph{multifactor separation}).
\end{enumerate}

\paragraph{Two factor separation.} To distinguish between time-invariant and time-varying factors, we sort the eigenvalues based on their distance from the complex value $1 + \imath 0$. Then, the subspace of static features $I_\code{stat}$ is defined as the eigenvalues' indices of the first $k_s$ elements in the sorted array. Then, the dynamic features subspace $I_\code{dyn}$ holds the remaining indices, i.e., $I_\code{dyn} = I \setminus I_\code{stat}$, where $I$ is the set of all indices, and $S_1 \setminus S_2$ generates the set difference of the sets $S_1$ and $S_2$.

\paragraph{Multifactor separation.} The identification of individual features such as the hair color or skin color in Sprites is less straightforward, unfortunately. Essentially, the key difficulty lies in that the Koopman matrix may encode an individual factor using a subspace whose dimension is unknown a priori. In addition, the subspace related to e.g., hair color may depend on the particular batch sample. For instance, we observed cases where the hair color subspace was of dimension $1, 2$ and $3$ for three different batches. Nevertheless, manual inspection of $I_\code{stat}$ typically reveals the role of the eigenfunctions, and it can be achieved efficiently as $k_s \leq 15$ in our experiments. Still, we opt for an automatic approach, and thus we propose the following simple procedure. We consider the power set of $I_\code{stat}$, denoted by $\mathcal{I}_\code{stat}$. Let $J$ be an element of $\mathcal{I}_\code{stat}$, then we swap the content of the batch with respect to $J$, and check the accuracy of the factor in question (e.g., hair color) using the pre-trained classifier. The subspace $J$ which corresponds to a single factor change is the one for which the accuracy of the factor decreases the most with respect to the original samples. In practice, we noticed that often the subspace of a factor is composed of subsequent eigenvectors in the sorting described for the two factor separation. Thus, many subsets $J$ of the power set $\mathcal{I}_\code{stat}$ can be ignored. We leave further exploration of this aspect for future work.

\subsection{Speaker Verification Experiment Details}

The speaker verification task in Sec.~\ref{sec:res_audio} is performed as follows. We use the test set of TIMIT which contains $24$ unique speakers, with eight different sentences per speaker. In total there are $192$ audio samples. We compute the latent representation $Z$ for this data, and its Koopman matrix $C$. Using the eigendecomposition of $C$, we identify the static and dynamic subspaces $I_\code{stat}$ and $I_\code{dyn}$. We denote by $Z_\code{stat}, Z_\code{dyn}$ the latent codes obtained when projecting $Z$ to $I_\code{stat}, I_\code{dyn}$, respectively. Formally, this is computed via $Z_\code{stat} = Z \cdot \Phi^{-1}[:, \; I_\code{stat}] \cdot \Phi[I_\code{stat}]$, and similarly for the dynamic features. To perform the speaker verification task we calculate the identity representation code for the batch given by
\begin{align*}
    \hat{Z}_\code{stat} = \frac{1}{t}\sum_{j=1}^t Z_\code{stat}[:, \; j, \; :] \ , \quad \hat{Z}_\code{dyn} = \frac{1}{t}\sum_{j=1}^t Z_\code{stat}[:, \; j, \;:] \ , \quad \hat{Z}_{\code{dyn}}, \hat{Z}_\code{dyn} \in \mathbb{R}^{192 \times 165} \ .
\end{align*}
The EER calculations are performed separately for $\hat{Z}_\code{stat}$ and $\hat{Z}_\code{dyn}$ for all of their $192^2 = 18336$ pair combinations.

\begin{figure*}[!b]
  \centering
  \includegraphics[width=1\linewidth]{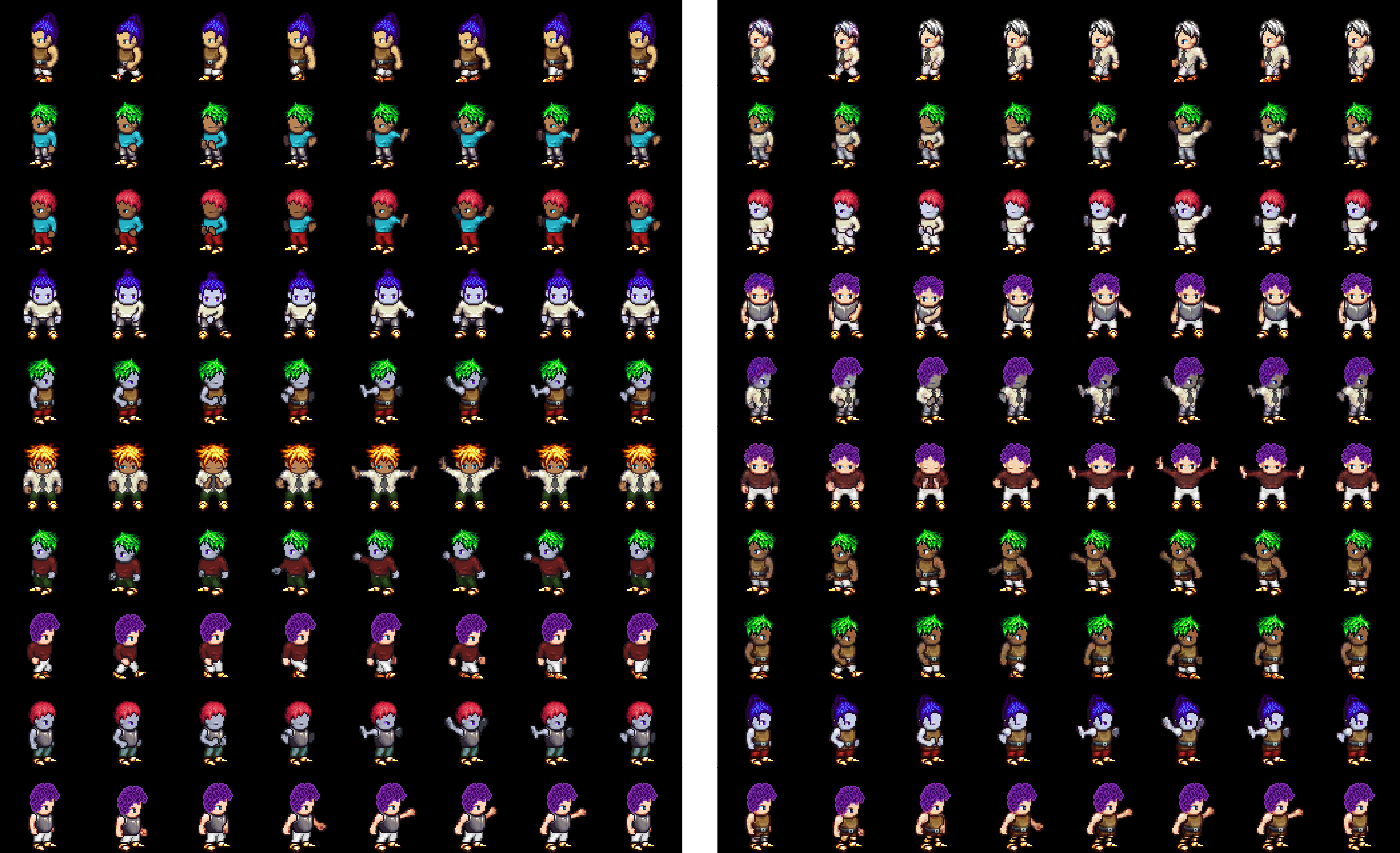}
  \caption{Unconditional generation of Sprite characters. The left panel shows the source sequences, and the right panel demonstrates the sampled characters where time-varying features are preserved.}  
  \label{fig:sprites_gen}
\end{figure*}

\section{Additional Results}
\label{app:results}

\subsection{Mean and Standard Deviation Measures}
We report the mean and standard deviation measures computed over $300$ runs for the results reported in Tab.~\ref{tab:factorial_swap}, \ref{tab:disentanglement_metrics_sprites}, \ref{tab:disentanglement_metrics_mug} in the main text. The results are detailed in Tab. \ref{tab:factorial_swap_app}, \ref{tab:disentanglement_metrics_app}. The low standard deviation highlights the robustness of our method to various seed numbers, and the overall stability of our trained models.

\begin{table}[!t]
\centering
\caption{Accuracy measures of factorial swap experiments, see Tab.~\ref{tab:factorial_swap}.}
\label{tab:factorial_swap_app}
\resizebox{\textwidth}{!}{
\begin{tabular}{>{}l>{}c>{}c>{}c>{}c>{}c} 
    \toprule
    Test & action & skin & top & pants & hair \\
    \midrule
    hair swap & $11.35\% \pm 0.65\%$ & $17.40\% \pm 0.79\%$ & $17.07\% \pm 0.77\%$ & $36.29\% \pm 0.88\%$ & $\boldsymbol{90.20\% \pm 0.52\%}$ \\ 
    skin swap & $11.35\% \pm 0.65\%$ & $\boldsymbol{72.72\% \pm 0.68\%}$ & $17.23\% \pm 0.89\%$ & $31.22\% \pm 0.84\%$ & $16.92\% \pm 0.77\%$ \\
    \bottomrule
    \hline
\end{tabular}
}
\end{table}

\begin{table}[!t]
\centering
\caption{Disentanglement metrics on Sprites and MUG, see Tabs.~\ref{tab:disentanglement_metrics_sprites} \ref{tab:disentanglement_metrics_mug}.}
\label{tab:disentanglement_metrics_app}
\begin{tabular}{>{}l>{}c>{}c>{}c>{}c}
    \toprule
    Method & Acc$\uparrow$ & IS$\uparrow$ & $H(y|x){\downarrow}$ & $H(y){\uparrow}$ \\
    \midrule
    Sprites & $\boldsymbol{100\% \pm 0\%}$ & $\boldsymbol{8.999 \pm \expnumber{2.3}{-6}}$ & $\boldsymbol{\expnumber{1.6}{-7} \pm \expnumber{2.2}{-7}}$ & $\boldsymbol{2.197 \pm 0}$ \\
    \midrule
    MUG & $77.45\% \pm 0.62\%$ & $\boldsymbol{5.569 \pm 0.026}$ & $\boldsymbol{0.052 \pm 0.004}$ & $1.769 \pm 0$ \\
    \bottomrule
\end{tabular}
\end{table}

\subsection{Data Generation}

We present qualitative results of our model's unconditional generation capabilities. To this end, we randomly sample static and dynamic features by producing a new latent code based on a random sampling in the convex hull of two randomly chosen samples from the batch. That is, for every sample in the batch we generate random coefficients $\{\alpha_j \in [0, 1]\}$ which form a partition of unity $\sum_{j \in J} \alpha_j = 1 $, where $J$ denotes the sample indices, and $|J|= b = 2$ is the number of samples in the combination. Then, we swap the static or dynamic features of the source ($\text{src}$) sample using the convex combination, $\bar{Z}[\text{src}, \; :, I_\code{stat}] = \sum_{j \in J} \alpha_j \bar{Z}[j, \; :, \; I_\code{stat}]$, $\bar{Z}[\text{src}, \; :, I_\code{dyn}] = \sum_{j \in J} \alpha_j \bar{Z}[j, \; :, \; I_\code{dyn}]$, respectively. The reconstruction of the latent codes for which static or dynamic factors are swapped are shown on the right panels in Figs.~\ref{fig:sprites_gen}, \ref{fig:mug_gen}, \ref{fig:sprites_gen_dyn}, \ref{fig:mug_gen_dyn} respectively. Our results on both Sprites and MUG datasets demonstrate a non-trivial generation of factors while preserving the dynamic/static factors shown on the left panels. 

\begin{figure*}[!t]
  \centering
  \includegraphics[width=1\linewidth]{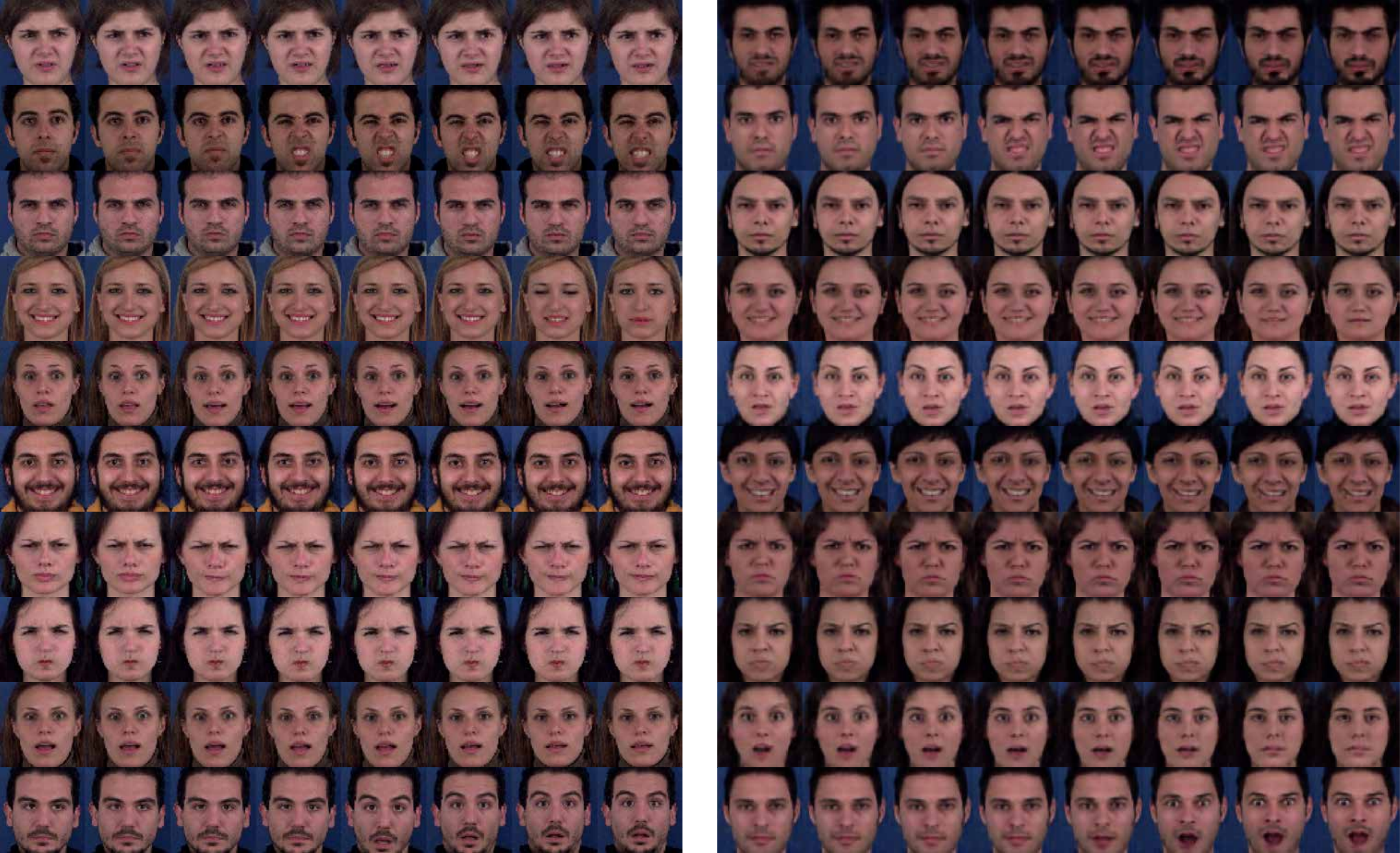}
  \caption{Unconditional generation for the MUG dataset. The left panel shows the source sequences, and the right panel demonstrates the sampled identities where time-varying features are preserved.}  
  \label{fig:mug_gen}
\end{figure*}

\begin{figure*}[!b]
  \centering
  \includegraphics[width=1\linewidth]{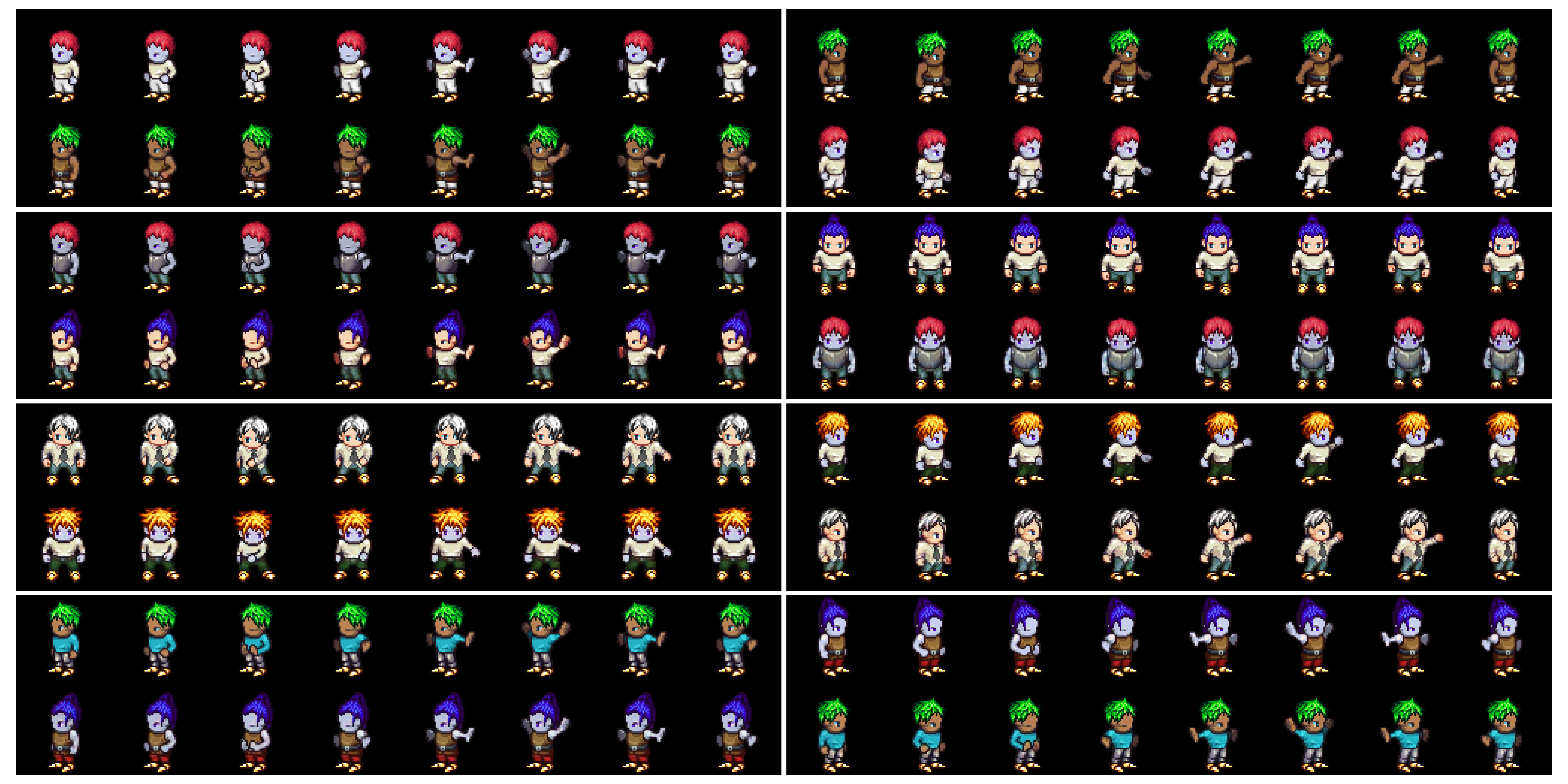}
  \caption{Several static and dynamic swap results on the Sprites dataset.}  
  \label{fig:sprites_swap}
\end{figure*}

\begin{figure*}[!t]
  \centering
  \includegraphics[width=1\linewidth]{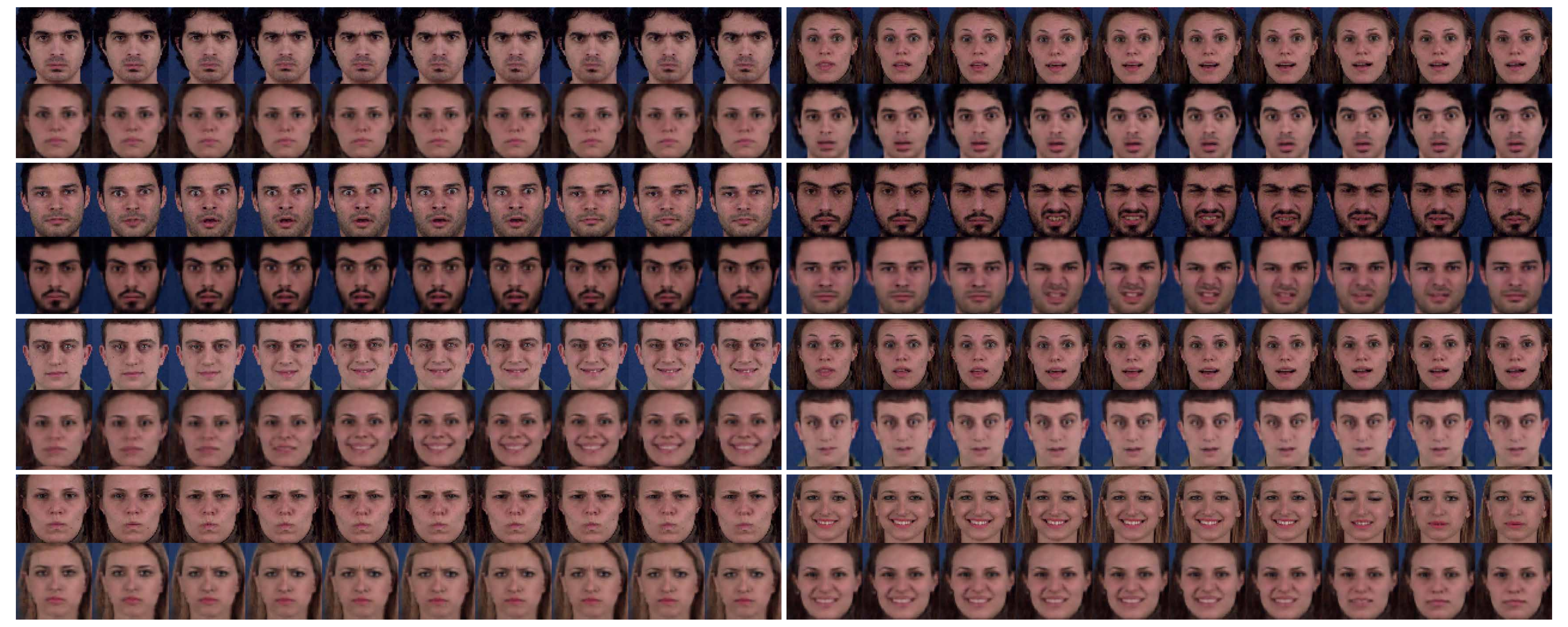}
  \caption{Several static and dynamic swap results on the MUG dataset.}  
  \label{fig:mug_swap}
\end{figure*}

\subsection{Two Factors and Multifactor Swaps}
We present several qualitative results of two factor swapping between static and dynamic factors of two given samples. In Figs.~\ref{fig:sprites_swap} and \ref{fig:mug_swap}, each odd indexed row $i \in \{1, 3, 5, 7\}$ shows the source sequence on the left and the target sequence to the right. Even indexed rows $j \in \{2, 4, 6, 8\}$ represent the reconstructed samples after the swap where on the left we show the static swap, and on the right the dynamic swap. Notably, all examples show clean swaps while preserving non-swapped features.

Additionally, we extend the result in Fig.~\ref{fig:sprites_factorial} to show an example in which we swap all multifactor combinations. Specifically, we show in Fig.~\ref{fig:multifactor_sprites} several multifactor swap from the source sequence (top row) to the target sequence (bottom row). The text to the left of every sequence in between denotes the swapped factor(s). For instance, the second row with the text \code{p} shows how the pants color of the target is swapped to the source character. Similarly, the row with the text \code{s+t+h} is related to the swap of the skin, top, and hair colors.

\subsection{Static incremental swap on MUG}
Similarly to Fig.~\ref{fig:mug_incremental} in the main text, we now show an incremental swap example on the MUG dataset where the static features are swapped gradually, see Fig.~\ref{fig:mug_incremental_static}. The multifactor subspaces used in this experiment are of sizes $|I_1| = 1, |I_2| = 2, |I_3| = 5$ where $I_1 \subset I_2 \subset I_3 \subset I_\code{stat}$. We observe a non-trivial gradual change from the source sequence (top row) to the target sequence (bottom row). In each incremental step, more static features are changing towards the target samples. Specifically, the skin color, hair color and density, ears structure, nose structure, chicks structure, chicks texture, lips and more other physical characteristics change gradually to better match the physical appearance of the target. Additionally, we observe that the source expression of the source is not altered during the transformation, highlighting the disentanglement capabilities of our approach.

\begin{figure*}[!b]
  \centering
  \begin{overpic}[width=1\linewidth]{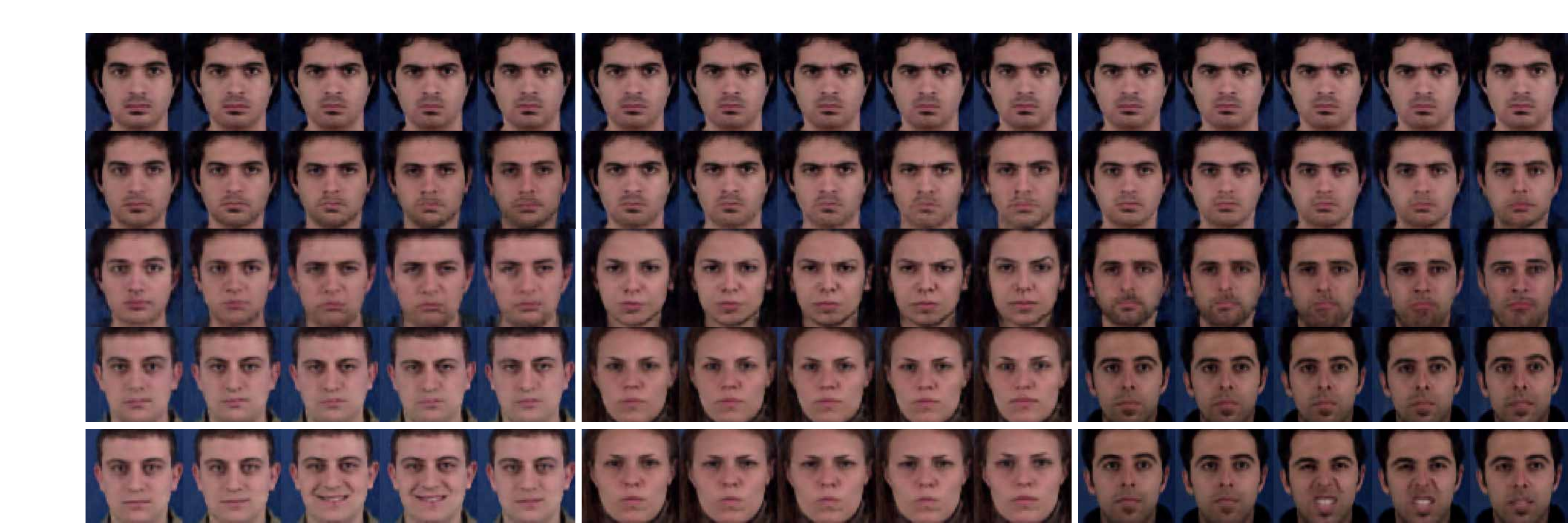} 
    \put(-2, 27.5){source}
    \put(-2, 21.5){$\tilde{X}(I_\code{1})$}
    \put(-2, 15){$\tilde{X}(I_\code{2})$}
    \put(-2, 8.5){$\tilde{X}(I_\code{3})$}
    \put(-2, 2){target}
  \end{overpic}
  \caption{An incremental swap result of the static features on the MUG dataset.}   
  \label{fig:mug_incremental_static}
\end{figure*}

\begin{figure*}[t]
  \centering
  \begin{overpic}[width=1\linewidth]{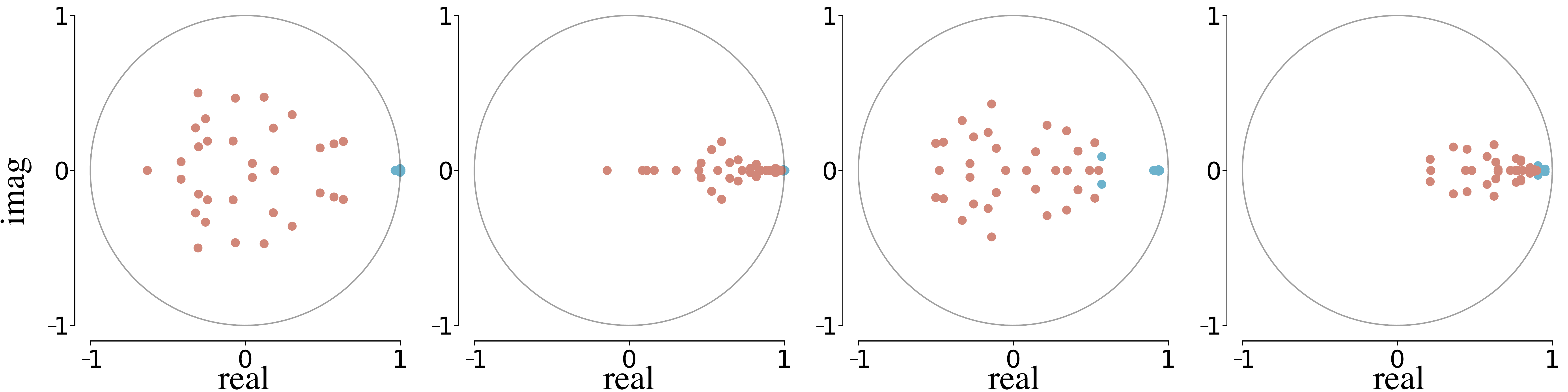} 
    \put(13, 25){Ours}
    \put(38, 25){$\mathcal{L}_\mathrm{stat}$}
    \put(62, 25){$\mathcal{L}_\mathrm{dyn}$}
    \put(87, 25){KAE}
  \end{overpic}
  \caption{The Koopman matrix spectrum of different models.}
  \label{fig:spectral_ablation}
\end{figure*}

\subsection{Koopman Matrix Spectrum Ablation Study}
We would like to explore the impact of our spectral loss on the spectrum and the eigenvalues scattering of the Koopman matrix $C$. To this end, we train four different models: full model with $\mathcal{L}_\mathrm{eig}$, KAE + $\mathcal{L}_\mathrm{stat}$, KAE + $\mathcal{L}_\mathrm{dyn}$, and baseline KAE without $\mathcal{L}_\mathrm{eig}$. We show in Fig.~\ref{fig:spectral_ablation} the obtained spectra for the various models, where eigenvalues associated with static factors are marked in blue, and the dynamic components are highlighted in red. Our model shows a clear separation between the static and dynamic factors, allowing to easily disentangle the data in practice. In contrast, the models KAE and $\mathcal{L}_\code{stat}$ yield spectra in which the static and dynamic components are very close to each other, leading to challenging disentanglement. Finally, the model $\mathcal{L}_\code{dyn}$ shows separation in its spectrum, however, some of the static factors drift away from the eigenvalue $1$.

\subsection{Computational Resources Comparison}
We compare our method in terms of network memory footprint and the amount of data used for the Sprites dataset. We show in Tab.~\ref{tab:resources} the comparison of our method with respect to the other methods. All other approaches use significantly more parameters than our method, which uses $2$ million weights. In addition, S3VAE and C-DSVAE utilize additional information during training. S3VAE exploits supervisory signals to an unknown extent as the details do not appear in the paper, and the code is proprietary. C-DSVAE uses data augmentation of size sixteen times the train set, that is, for content augmentation they generated eight times more train data, and the same amount for the motion augmentation. In comparison, our method and DSVAE do not use any additional data on top of the train set.

The time complexity analysis of our method is governed by the complexities of the encoder, decoder, the Koopman layer and the loss function. The encoder and decoder can be chosen freely and are typically similar to prior work~\citep{hsu2017unsupervised, li2018disentangled, zhu2020s3vae, bai2021contrastively}, and thus we focus our analysis on the Koopman layer and the loss function. The dominant operation in the Koopman layer in terms of complexity is the computation of the pseudo-inverse of $Z_p$ (please see Section~\ref{sec:kae}). Computing the pseudo-inverse of a matrix is implemented in high-level deep learning frameworks such as pyTorch via SVD. The textbook complexity of SVD is $\mathcal{O}(\min(mn^2, m^2n))$ for an $m\times n$ matrix. In addition, computing the loss function involves eigendecomposition. The theoretic complexity of eigendecomposition is equivalent to that of matrix multiplication, which in our case is $\mathcal{O}(k^{2.376})$, where the Koopman operator is of size $k \times k$. In comparison, the matrices $Z_p$ for which we compute pseudo-inverse are of size $b\cdot t \times k$, and typically $k < b \cdot t$. Thus, the pseudo-inverse operation governs the complexity of the algorithm. The development of efficient SVD algorithms for the GPU is an ongoing research topic in itself. As far as we know, there is some parallelization in torch SVD computation, mainly affecting the decomposition of \emph{large} matrices. The Koopman matrices we use are typically small (e.g., $100 \times 100$), and thus the effective computation time is short.

\begin{table}[h!]
\centering
\caption{Computational resources comparison.}
\label{tab:resources}
\resizebox{0.95\textwidth}{!}{%
\begin{tabular}{lcc|cc|c} 
    \toprule
    Method & DSVAE & R-WAE & S3VAE & C-DSVAE & Ours \\ 
    \midrule
    Type & unsupervised & (weakly) unsupervised & self-supervised & self-supervised & unsupervised  \\ 
    Params & 21M & 121M & 11M & 11M & 2M  \\ 
    Data & - & labels & supervisory signals & data augmentation ($\times 16$) & -  \\ 
    \bottomrule
    \hline
\end{tabular} }
\end{table}

\clearpage
\begin{figure*}[t]
  \centering
  \begin{overpic}[width=.75\linewidth]{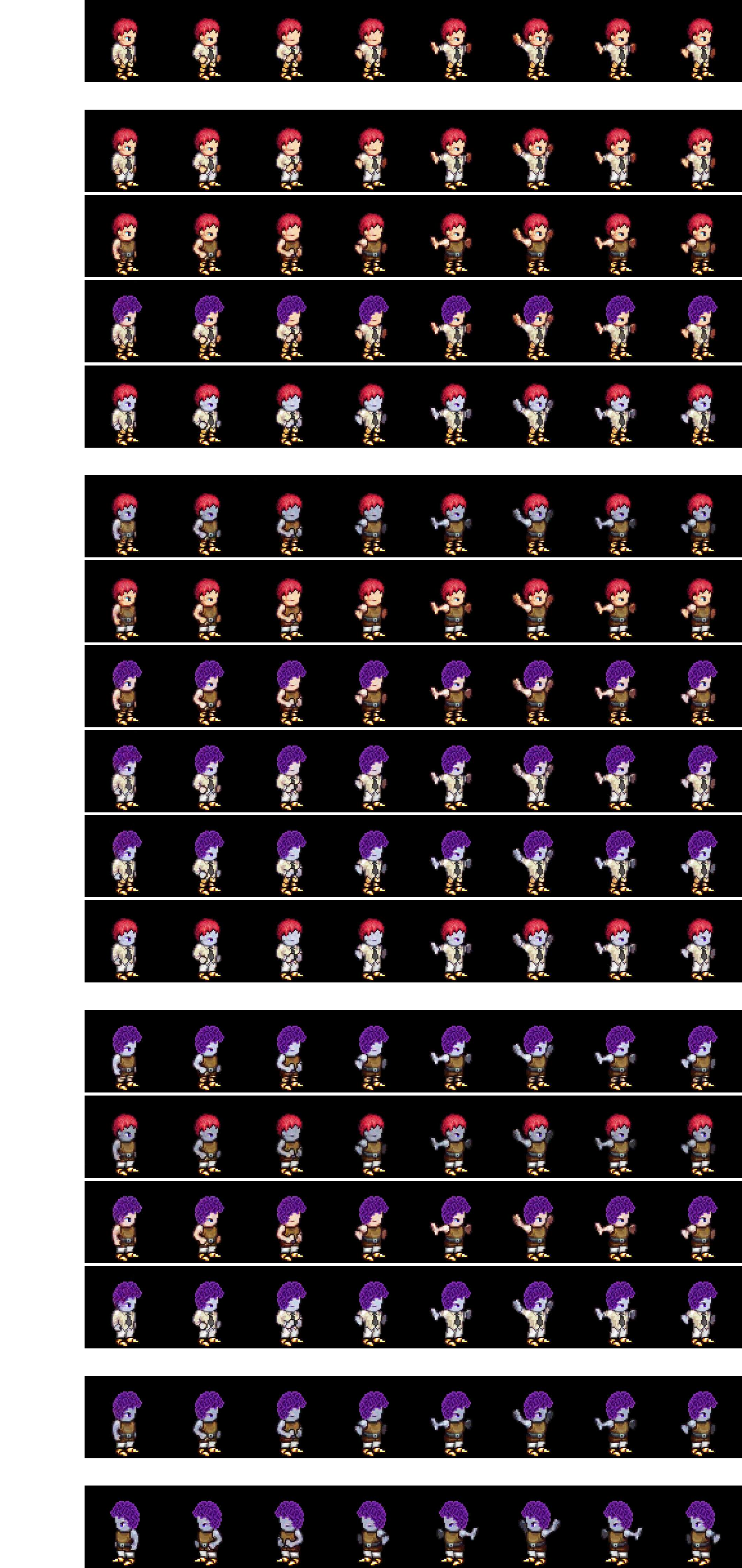} 
    \put(0, 97){source}
    \put(0, 90){\code{p}} \put(0, 84.5){\code{t}} \put(0, 79){\code{h}} \put(0, 73.5){\code{s}}
    \put(0, 66.5){\code{t+s}} \put(0, 61){\code{p+t}} \put(0, 55.5){\code{h+t}} \put(0, 50){\code{p+h}} \put(0, 44.5){\code{s+h}} \put(0, 39.5){\code{s+p}}
    \put(0, 32.5){\code{s+t+h}} \put(0, 27){\code{p+t+s}} \put(0, 21.5){\code{p+h+t}} \put(0, 16){\code{p+h+s}}
    \put(-2, 9){\code{s+t+h+p}}
    \put(0, 2){target}
  \end{overpic}
  \caption{Multifactor swap of individual static factors and their combinations on the Sprites dataset.}  
  \label{fig:multifactor_sprites}
\end{figure*}

\begin{figure*}[!t]
  \centering
  \includegraphics[width=1\linewidth]{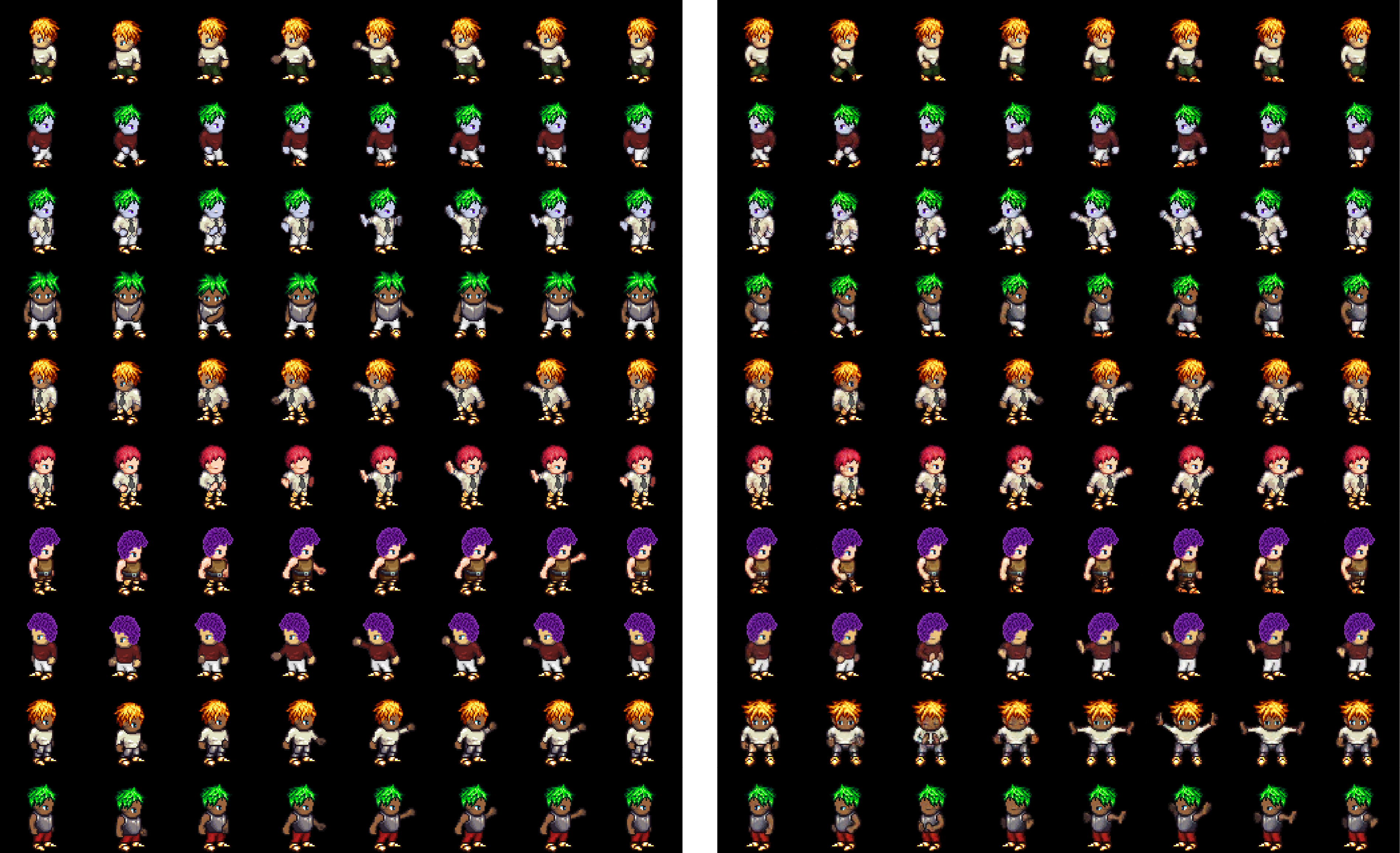}
  \caption{Unconditional generation of Sprite characters where the static factors are kept fixed, and the dynamic features are randomly sampled.}  
  \label{fig:sprites_gen_dyn}
\end{figure*}

\begin{figure*}[!t]
  \centering
  \includegraphics[width=1\linewidth]{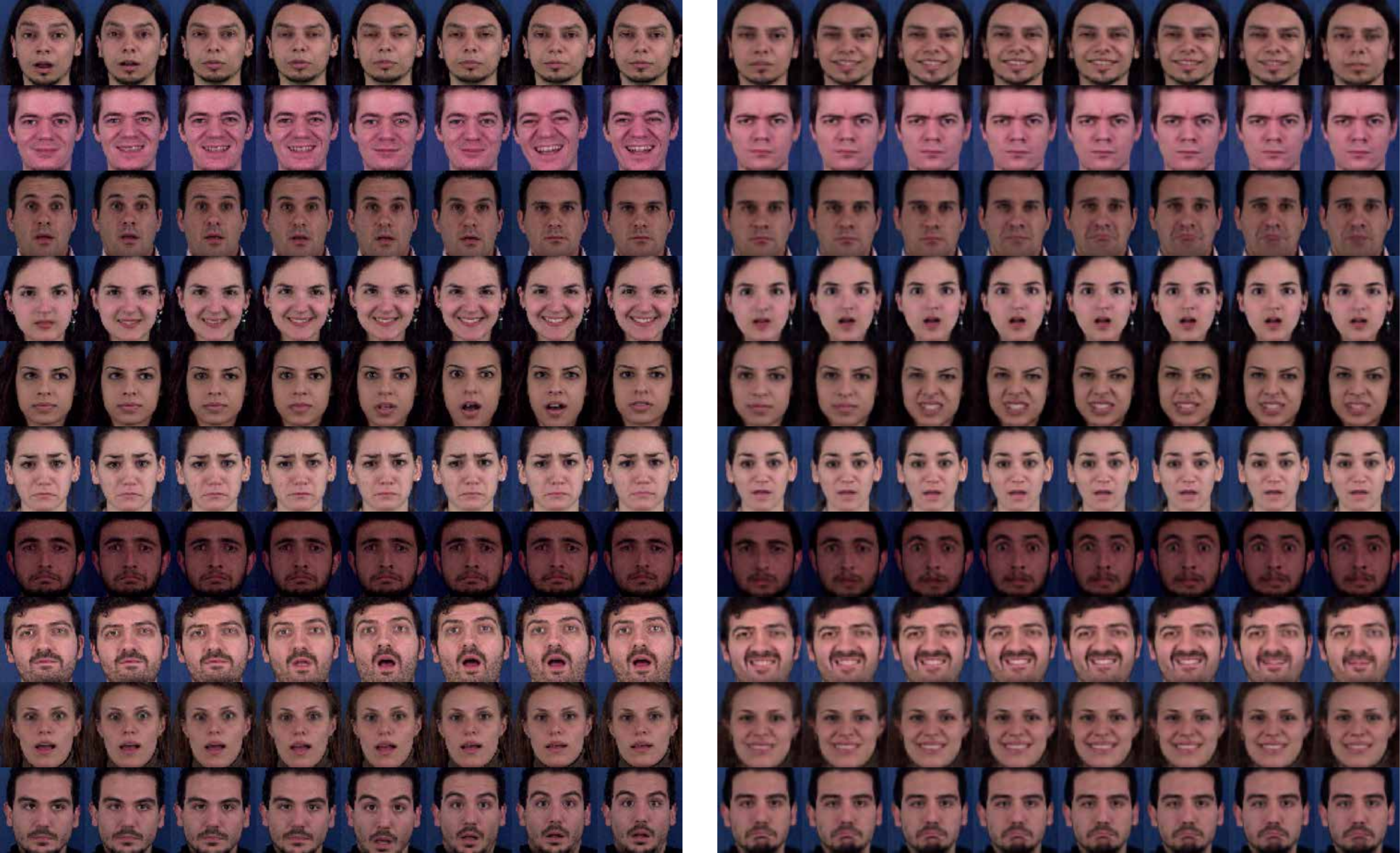}
  \caption{Unconditional generation of MUG images where the static factors are kept fixed, and the dynamic features are randomly sampled.}  
  \label{fig:mug_gen_dyn}
\end{figure*}

\end{document}